\newtheorem{theorem}{Theorem}[section]
\newtheorem{proposition}[theorem]{Proposition}
\newtheorem{lemma}[theorem]{Lemma}
\newtheorem{corollary}[theorem]{Corollary}
\newtheorem{definition}[theorem]{Definition}
\theoremstyle{definition}
\newtheorem{question}{Question}
\newtheorem{remark}[theorem]{Remark}
\newcommand{\cF}{\mathcal{F}}
\newcommand{\R}{\mathbb{R}}
\newcommand{\Z}{\mathbb{Z}}
\newcommand{\N}{\mathbb{N}}
\newcommand{\CC}{\mathrm{C}}
\newcommand{\E}{\mathbb{E}}
\newcommand{\Prob}{\mathbb{P}}
\newcommand{\sspan}{\textnormal{span}}
\def\mc{\mathcal}
\newcommand{\one}{\mathbbm{1}}
\newcommand{\dd}{\mathrm{d}}
\def\W{\mathrm{W}}
\newcommand{\LL}{\mathrm{L}}
\newcommand{\BB}{\mathrm{B}}
\DeclareMathOperator*{\dist}{\mathrm{dist}}
\newcommand{\TV}{\mathrm{TV}}
\DeclareMathOperator*{\id}{\mathrm{id}}
\newcommand{\Sob}[2]{\W^{#1,#2}}
\newcommand{\supp}{\operatorname{supp}}
\newcommand{\diam}{\operatorname{diam}}
\newcommand{\rad}{\operatorname{rad}}
\newcommand {\la} {\langle}
\newcommand {\ra} {\rangle}
\newcommand{\argmin}{\operatorname{argmin}}
\newcommand{\bracket}[1]{\left\la#1\right\ra}
\newcommand{\eps}{\varepsilon}
\newcommand{\quanterror}{\mathcal{E}}
\title{Structured approximations of measures}
\begin{document}

\author{Keaton Hamm$^{1,2}$}
\address{$^1$Department of Mathematics, The University of Texas at Arlington, Arlington, TX}
\address{$^2$Division of Data Science, The University of Texas at Arlington, Arlington, TX
}
\email{keaton.hamm@uta.edu}

\author{Varun Khurana$^3$}\address{$^3$Department of Applied Mathematics, Brown University, Providence, RI 02906.}\email{varun\_khurana@brown.edu}


\begin{abstract}
We study the approximation of probability measures in the Wasserstein-$p$ distance by structured classes of approximators, motivated by applications in imaging, machine learning, and physical measurement under sensor constraints. We obtain three sets of results. First, for measures with densities bounded away from zero on a bounded Lipschitz domain $\Omega$, we prove that any approximation scheme for functions in $\LL_p(\Omega)$ transfers, with linear rate, to a corresponding approximation scheme for measures in $\W_p(\Omega)$. The argument applies a theorem of Bogovskii on regularity of solutions to the continuity equation in the Benamou–Brenier formulation of optimal transport.  We exhibit concrete approximation schemes (polynomials, shift-invariant spaces, cardinal interpolation with radial basis functions, kernel density estimators, and piecewise approximations on nonuniform Voronoi partitions) that fit the framework. As a matter of independent interest, we prove a negative Sobolev lower bound that generalizes existing bounds from $p=2$ to all $p\in(1,\infty)$. We also consider deterministic bounds for discrete approximations to arbitrary measures in terms of the mesh norm of a quasi-uniform set of points. We specialize these bounds to show that compactly supported measures admit a deterministic $N$-term approximation $\mu_N$ such that $\W_p(\mu,\mu_N) = O(N^{-\frac1d})$ for all $d\geq 1$, which matches the asymptotic optimal quantizer rate. We also extend these results to non-compactly supported measures with appropriate tail decay.
\end{abstract}

\maketitle

\section{Introduction}

The approximation of functions with various levels of smoothness by structured classes of functions has been studied extensively for a long time, going back at least to the study of approximating functions with polynomials by Chebyshev (see, e.g., \cite{kolmogorov1998mathematics} for a history) and Weierstrass (and later Stone's generalization \cite{stone1948generalized}). Many initial results showed density of a special class of functions (e.g., polynomials of a certain degree) in a larger smoothness space (e.g., $\CC^k(\Omega)$). However, for approximation in practice, one often wants to know the approximation rate of a concrete scheme involving the approximating functions.

In more modern language, let $(\mc{X},d)$ be a metric space and $\mc{D}, \mc{A}\subset \mc{X}$ be \textit{data}, and a structured set of \textit{approximations} to the data, respectively. Then for $x\in\mc{D}$, define its distance from the approximation set in the metric $d$ via
\[\dist(x,\mc{A})_{d} = \inf_{y\in\mc{A}}d(x,y).\]
This quantity captures how well $\mc{A}$ \textit{locally} approximates a data point $x$. Next, define the \textit{approximation power} of $\mc{A}$ in $\mc{D}$ with respect to the metric $d$ via
\[\mc{E}(\mc{D},\mc{A})_d := \sup_{x\in\mc{D}}\dist(x,\mc{A})_d = \sup_{x\in\mc{D}}\inf_{y\in\mc{A}}d(x,y).\]
This is a worst-case error measure that is useful for determining the \textit{global approximation power} of a model class $\mathcal{A}$. 

Typically one bounds $\dist(x,\mc{A})_d$ or $\mc{E}(\mc{D},\mc{A})_d$ in terms of some quantity of relevance to $\mc{D}$, e.g., the smoothness level, and also of $\mc{A}$, e.g., the degree of polynomial. For example, if $\mc{A}$ are Taylor polynomials of degree at most $k$, then one has that for $f\in \CC^{k+1}(\Omega)$ for some domain $\Omega\subset\R^d$, $\dist(f,\mc{A})_{\LL_\infty(\Omega)} \leq \frac{\diam(\Omega)^{k+1}}{(k+1)!}\max_{|\alpha|=k+1}\|D^\alpha f\|_{\LL_\infty(\Omega)}$. Hangelbroek et al.~\cite{hangelbroek2012cardinal} show that for Sobolev functions $f\in\Sob{k}{p}(\R^d)$, cardinal interpolation at the scaled lattice $h\Z^d$ by a shift-invariant space associated with the Gaussian generator $S_p^h(\phi_{h^2})$ (see \Cref{SEC:GaussianCardinal} for definition), $\dist(f,S_p^h(\phi_{h^2}))_{\Sob{k}{p}(\R^d)} \leq C\|f\|_{\Sob{k}{p}(\R^d)}h^k$ where the constant depends on $k,p,$ and $d$. In this instance, the local approximation power is of order $h^k$ where $h$ is the spacing of the interpolation grid.

While function approximation has been well studied and we have a rich, diverse understanding of it, approximation of measures by structured approximation schemes is significantly less understood. Much is known about empirical approximation for measures (see \Cref{SEC:PriorArt} for more details). However, deterministic methods for discrete approximation (in particular schemes in which points cannot be sampled i.i.d. from $\mu$) are comparatively underdeveloped, despite their relevance for applications where the sampling locations are fixed by external constraints.

The main purpose of this paper is threefold: first, to establish a quite general framework for transferring approximation rates from function approximation schemes to approximation of measures with smooth densities; second, to exhibit rates of approximation for piecewise approximation schemes subordinate to a Voronoi decomposition from a nonuniform mesh; and third, to prove discrete $N$-term approximation rates for arbitrary measures (including discrete ones without densities) that do not require empirical sampling of the target measure.

\subsection{Approximation of measures with smooth densities}

Of interest to us here is the case $(\mc{X},d) = (\W_p(\Omega),\W_p)$, which is the Wasserstein space over some fixed domain $\Omega\subset\R^d$. Our main question is how to transfer known approximation results from function approximation theory to measure approximation. In particular, we ask the following question.

\begin{question}\label{Q:Smoothness}
Let $\Omega\subset\R^d$, $p\in[1,\infty)$, and let $\cF\subset \LL_p(\Omega)$ be a set of smooth functions (e.g., Sobolev functions).  Let $\mathcal{M}_{\mc{F}}\subset \W_p(\Omega)$ consist of absolutely continuous measures whose densities lie in $\mc{F}$, and we use the slight abuse of notation $\mc{M}_{\mc{F}} := \{f\dd x:f\in\mc{F}\}$. Suppose $\mathcal{A}$ is an approximation space approximating $\cF$, and thus $\mc{M}_{\mc{A}} = \{g\dd x:g\in\mathcal{A}\}$ approximates $\mathcal{M}_{\mc{F}}$. Can one transfer approximation power of $\cF$ with $\mathcal{A}$ to $\mc{M}_{\mc{F}}$ with $\mathcal{M}_{\mc{A}}$? More concretely, under what conditions do we have \begin{equation}\label{EQN:ApproximationPower}\mc{E}(\mathcal{M}_{\mc{F}},\mc{M}_{\mathcal{A}})_{\W_p} \leq C \mc{E}(\cF,\mathcal{A})_{\LL_p}\end{equation} where $C$ is some constant that is quantifiable in terms of properties of $\mc{A}$ and $\mc{F}$?
\end{question}

\begin{question}\label{Q:Local}
    The approximation error above is a global worst case approximation measure, but some approximation schemes are known to achieve good approximation rates locally. Therefore, we also ask if for every $f\in\mc{F}$, there is a constant $C_f$ which may depend on $f$ such that
    \begin{equation}\label{EQN:LocalApproximationPower}
        \dist(f\dd x, \mc{M}_{\mc{A}})_{\W_p} \leq C_f\dist(f,\mc{A})_{\LL_p}?
    \end{equation}
\end{question}

If the answer to \Cref{Q:Smoothness} is positive (i.e., \eqref{EQN:ApproximationPower} holds), then we say that \textit{global approximation power transfers} from $\mc{F}$ with $\mc{A}$ to $\mc{M}_{\mc{F}}$ with $\mc{M}_{\mc{A}}$. If \eqref{EQN:LocalApproximationPower} holds, then we say that \textit{local approximation power transfers}.

Our main result resolves Questions \ref{Q:Smoothness} and \ref{Q:Local} affirmatively under minimal assumptions on the target and approximating measures. In the case $p=1$, all that is needed is that $\Omega$ is bounded, whereas for general $p$, $\Omega$ must be a bounded Lipschitz domain (\Cref{DEF:LipschitzDomain}), and the functions $\mc{F}$ and $\mc{A}$ must be bounded away from $0$. Under these mild conditions, we find that global and local approximation power transfers. We state informally this result here.

\begin{theorem}[Informal statement of \Cref{THM:ApproximationTransfer}]\label{THM:ApproximationTransferIntro}
Let $p\in(1,\infty)$, and $\Omega\subset\R^d$ be a bounded Lipschitz domain. Let $\mc{F},\mc{A}\subset \LL_p(\Omega)$ consist of densities bounded uniformly away from 0. Then both \[\mc{E}(\mathcal{M}_{\mc{F}},\mc{M}_{\mathcal{A}})_{\W_p} \leq C \mc{E}(\cF,\mathcal{A})_{\LL_p}\quad \textnormal{and}\quad\dist(f\dd x,\mc{M}_{\mc{A}})_{\W_p} \leq C\dist(f,\mc{A})_{\LL_p}\] for some $C=C_{\Omega,d,p,m}$ (independent of $f$ in the second bound). If $p=1$, both inequalities hold without the assumption that the densities are bounded below, and for bounded (not necessarily convex) $\Omega$.
\end{theorem}

The proof of this result uses a result of Bogovskii \cite{bogovskii1979solution}  to exhibit control of solutions to the continuity equation in the dynamic formulation of optimal transport. We are not aware of prior work that applies Bogovskii's theorem to obtain upper bounds on the Wasserstein distance via the dynamic formulation. This approach has the advantage of working on bounded domains with rough boundaries (the result holds for John domains; see the discussion following \Cref{THM:ApproximationTransfer}). Thus, the result above holds in a quite general setting; in particular, we do not need to assume smoothness of the boundary or convexity of $\Omega$.

In \Cref{SEC:Examples}, we describe several examples of concrete approximation schemes that fit into the framework above: Taylor polynomial approximation, shift-invariant space approximations, cardinal Gaussian interpolation, and kernel density estimation (KDE).

When $\Omega$ is convex and densities $f$ and $g$ are bounded above almost everywhere, we also prove a lower bound in a negative Sobolev norm for general $p$, which generalizes Peyre's $\dot{H}^{-1}$ bound for $\W_2$. If $f,g\leq M$, the bound is of the form
\[M^{-\frac{1}{p'}}\|f-g\|_{\Sob{-1}{p}_\diamond(\Omega)} \leq \W_p(f\dd x,g\dd x)\]
(see \Cref{SEC:LowerBound} for details).

\subsection{Piecewise approximations of densities}

\Cref{THM:ApproximationTransferIntro} is about global approximation of smooth functions on bounded Lipschitz domains. However, in practice, we often wish to form piecewise approximations of a density on some decomposition of $\Omega$.  To this end, we consider a quasi-uniform set of points $X:=\{x_i\}_{i=1}^\infty\subset\R^d$ and its corresponding Voronoi partition $\{V_i\}_{i=1}^\infty\subset\R^d$. For our purposes, we will assume that $\R^d=\bigcup_i V_i$ and that the Voronoi partition can be made disjoint: $V_i\cap V_j=\emptyset$, $i\neq j$ (one can enforce this for arbitrary quasi-uniform $X$; see \Cref{SEC:BackgroundVoronoi}). Our main result in this direction is the following.

\begin{theorem}[Informal statement of \Cref{thm:piecewise-approximation}]
\label{thm:piecewise-approximation-intro}
Let $p\in(1,\infty)$ and $\Omega\subset\R^d$ be a bounded Lipschitz domain. Suppose $f\in\LL_p(\Omega)$ is a density bounded away from 0, and that $f$ is approximated by $A_i(f)$ locally on each cell of a quasi-uniform Voronoi partition with local $\LL_\infty(V_i)$
error sufficiently small. Then the piecewise approximation, after renormalization to a density $g$, satisfies
\[\W_p(f\dd x,g\dd x) \lesssim \|f\|_{\LL_p(\Omega)}\left(\sum_{i=1}^N \|f - A_i(f)\|^p_{\LL_p(V_i)}\right)^{\frac1p}.\]
If $p=1$, $\Omega$ need not be Lipschitz and $\|f\|_{\LL_1(\Omega)}=1$ in the final bound.
\end{theorem}

The piecewise approximation framework is also well-suited to adaptive schemes. 
The global error above is in terms of the local contributions of  $\|f - 
A_i(f)\|_{\LL_p(V_i)}^p$, each of which can be bounded in terms of the local 
$\LL_\infty$ approximation error and the volume of the Voronoi cell $V_i$. This suggests a computational tradeoff in which finer meshes and higher-order approximators can be deployed 
where $f$ is difficult to approximate, and coarser, cheaper approximators where 
$f$ is well behaved, with the overall Wasserstein error controlled throughout (see \Cref{REM:EUpperBound}).

\subsection{Voronoi approximations of arbitrary measures}

\Cref{thm:piecewise-approximation-intro} exhibits how to obtain approximation error rates for piecewise approximations of (typically) smooth functions. However, in practice one often observes only discrete samples of a measure which may not be smooth. There are two main types of discrete approximations considered in the literature for measures in $\W_p(\R^d)$: optimal quantizers and empirical measures. Optimal quantization rates are only asymptotic, whereas empirical measure convergence rates are in expectation or probability, and typically achieve the same asymptotic rate (usually $O(N^{-\frac1d})$, but see \Cref{SEC:PriorArt} for a discussion). Optimal quantization asks where the best locations of samples of the measure are to give optimal $N$-term approximation in $\W_p$. These are informative to give an ideal bound, but such schemes may not always be practical. In contrast, empirical measure approximation requires that one be able to sample points i.i.d. from the underlying measure, but this may not be feasible in some applications. 

Indeed, there may be physical or economical constraints that do not allow one to sample from an underlying measure directly. For example, consider approximating precipitation intensity over a geographic area based on rainfall gauges. Natural constraints on placement of gauges include terrain, human infrastructure, and cost (limiting density of gauges). In such settings, the sensor locations $\{x_i\}$ are determined by external constraints rather than the measure itself, yet one still wishes to approximate the underlying distribution. To that end, we consider discrete approximations to arbitrary measures (with no smoothness or absolute continuity assumption) in $\W_p(\R^d)$ at quasi-uniform points $X$ (see \Cref{SEC:Voronoi}).

The following are our main results from \Cref{SEC:Voronoi}.

\begin{theorem}[Informal statement of \Cref{THM:Nonuniform,COR:NonuniformCompact}]
\label{thm:voronoi-intro}
Let $\mu\in\W_p(\R^d)$, $p\in[1,\infty)$, and let 
$X\subset\R^d$ be a discrete point set with mesh norm $h_X<\infty$. For each $i$, 
let $\tau_i$ be any non-negative measure supported on the Voronoi cell $V_i$ of $X$ with $\tau_i(V_i)=0$ only when $\mu(V_i)=0$.
Then the structured approximation $\mu_X = \sum_i \frac{\mu(V_i)}{\tau_i(V_i)}\tau_i$ (with the convention that $\mu(V_i)/\tau_i(V_i)=0$ if $\tau_i(V_i)=0$) 
satisfies
\[
    \W_p(\mu,\mu_X) \leq 2h_X.
\]
In particular, if $\mu$ has compact support, then for any $N\in\N$ there exists an at most $N$-term measure $\mu_X$ satisfying
\[
    \W_p(\mu,\mu_X) = O(N^{-\frac{1}{d}}).
\]
\end{theorem}

The $N$-term bound in the second statement matches known rates for optimal quantizers in all dimensions, and matches optimal quantization rates and empirical measure approximation on compact domains when $d\geq 3$ (see \Cref{SEC:PriorArt} for details), while being faster than the empirical measure convergence rates in the cases $d=1,2$ due to the method of estimation. Moreover, these rates extend to non-compactly supported measures 
with sufficient tail decay (\Cref{PROP:noncompact_ext,COR:Noncompact}).

Our discrete approximations are based on Voronoi partitions of $\R^d$ (or a subset $\Omega$) with respect to the points $X$, and we show that the approximation rates are governed by the mesh norm of $X$. Our approximations are motivated by orthographic projection camera models from computer vision \cite{stockman2001computer} in which points are orthogonally projected onto the camera plane. An $N$-pixel grayscale image is typically considered as a matrix or vector, and is an array of $N$ pixel intensity values. However, in machine learning applications, it has been observed that treating images as vectors in Euclidean space can fail to accurately reflect the structure that appears in them. Many recent works have proposed understanding images as probability measures, for instance by mapping pixel intensities to a uniform grid in $\R^2$ \cite{cloninger2025linearized,hamm2023wassmap,khurana2022supervised,liu2025wasserstein,mathews2019molecular,negrini2024applications,wang2010optimal}. This viewpoint has been used for manifold learning and supervised classification in these references with success, as Wasserstein distances between images treated as measures are more geometrically meaningful than Euclidean distances.

\subsection{Prior Work}\label{SEC:PriorArt}

Optimal quantization of measures and empirical measure approximation have been studied in a variety of works: \cite{bucklew1982multidimensional,graf2007foundations,hardin2020asymptotic}, and \cite{canas2012learning,dereich2013constructive,fournier2015rate,singh2018minimax,weed2019sharp}, respectively. When $d>2p$, the rate of convergence of $N$-term approximations, $\mu_N$, of an absolutely continuous measure $\mu$ for both are $O(N^{-\frac1d})$.

Let $\mathcal{Q}_N:=\{\nu \in \W_p(\R^d):|\supp(\nu)|\leq N\}$ be the set of all discrete measures in $\W_p$ supported on at most $N$ points. Then the optimal quantization problem for a given measure $\mu\in \W_p(\R^d)  $ is to find a solution to
\[\quanterror(\mu,\mathcal{Q}_N)_{\W_p}:=\inf_{\nu\in\mathcal{Q}_N}\W_p(\mu,\nu).\]
Graf and Luschgy \cite[Lemma 3.1]{graf2007foundations} show that
\[\quanterror(\mu,\mathcal{Q}_N)_{\W_p} = \inf_{\substack{\alpha\subset\R^d\\ \#\alpha\leq N}}\left(\int_{\R^d}\min_i|x-\alpha_i|^p\dd\mu(x)\right)^\frac1p. \]
That is, finding an optimal quantizer (measure) is equivalent to the problem of approximating $\mu$ with $N$ centers in $\R^d$.

Bucklew and Wise \cite{bucklew1982multidimensional} prove that if $\mu\in \W_p(\R^d)$ has finite $p+\eps$ moment for some $\eps>0$ and a nontrivial absolutely continuous part, then
\[\quanterror(\mu,\mathcal{Q}_N)_{\W_p} = O(N^{-\frac1d}).\]
Interestingly, their analysis shows that the rate above only depends on the absolutely continuous part of $\mu$. Indeed, for any singular $\mu$, $\quanterror(\mu,\mathcal{Q}_N)_{\W_p} = o(N^{-\frac1d})$, which is a stronger statement. Hardin et al.~\cite{hardin2020asymptotic} merged approaches of optimal quantization and optimizing Riesz energies, which lead to further results on asymptotics of optimal quantizers.

Another large segment of the literature considers approximations of the form $\mu_N = \frac{1}{N}\sum_{i=1}^N\delta_{x_i}$ where $x_i$ are drawn i.i.d. from $\mu$. The random measure $\mu_N$ is typically called the empirical measure of $\mu$. Most works estimate $\E[\W_p(\mu,\mu_N)]$, and in contrast to optimal quantization, the bounds hold for all $N\in\N$, but sometimes require restricted assumptions on the measure $\mu$.

Fournier and Guillin \cite{fournier2015rate} show that if $\mu \in \W_p(\R^d)$ has finite $q$-th moment for some $q > p$, then for all $N\in\N$,
\begin{align*}
    \mathbb{E}[ \W_p( \mu, \mu_N )^p ] \leq C M_q^{\frac{p}{q}}(\mu) \begin{cases} N^{-\frac12} + N^{-\frac{q-p}{q}} & p > \frac{d}{2},\ q \neq 2p \\
    N^{-\frac12} \log(1+N) N^{-\frac{q-p}{q}} & p = \frac{d}{2}, \ q \neq 2p \\
    N^{-\frac{p}{d}} + N^{-\frac{q-p}{q}} & p \in (0,\frac{d}{2}), \ q \neq \frac{d}{d-p},
    \end{cases} 
\end{align*}
for some constant $C$ depending only on $p, q,$ and $d$.  This result generalized those of Dereich et al.~\cite{dereich2013constructive}. See also \cite{singh2018minimax} for related results.  

For measures on the $d$-dimensional torus, \cite{divol2021short} showed if $\mu$ is absolutely continuous with density bounded above and away from 0, then for all $N\in\N$,
\begin{align*}
    \mathbb{E}[ \W_p( \mu, \mu_N ) ] \lesssim  \begin{cases} N^{-\frac1d} & d \geq 3 \\
    N^{-\frac12} (\log(N))^{\frac12} & d = 2 \\
    N^{-\frac12} & d = 1.
    \end{cases}
\end{align*}
Ca{\~n}as and Rosasco \cite{canas2012learning} consider empirical approximation on compact smooth manifolds, and Weed and Bach \cite{weed2019sharp} prove asymptotic and finite-sample estimates in terms of clusterability and approximate low-dimensional support.

The closest work to our \Cref{THM:ApproximationTransferIntro} is that of Niles-Weed and Berthet \cite{niles2022minimax} who study nonparametric estimation of smooth 
densities in $\W_p$ from i.i.d.\ samples, establishing minimax-optimal rates for 
this problem. They show 
that wavelet estimators $\hat{f}$ based on $N$ i.i.d.\ samples from $f\in B^\alpha_{q}(\LL_{p'}(\Omega))$ (see \Cref{SEC:Johnson} for the definition of the Besov space)
achieve the minimax rate $\E\W_p(f\dd x,\hat{f}\dd x) = O(N^{-\frac{1+\alpha}{d+2\alpha}})$ for $d\geq3$, 
interpolating between the empirical measure rate $O(N^{-\frac1d})$ and the parametric rate 
$O(N^{-\frac12})$ as smoothness $\alpha$ increases. 

Additionally, Theorem 4 therein states that for $p\in[1,\infty)$ and densities $f,g\in\LL_p([0,1]^d)$ which are bounded above and away from 0 ($M\geq f(x),g(x)\geq m>0$ for a.e. $x\in[0,1]^d)$, one has
\[M^{-\frac{1}{p'}}\|f-g\|_{B^{-1}_{p.\infty}}\lesssim \W_p(f\dd x,g\dd x)\lesssim m^{-\frac{1}{p'}}\|f-g\|_{B^{-1}_{p,1}},\]
where the left and right-hand sides are norms on a negative Besov space. These type of bounds may be compared to negative Sobolev bounds like that of \cite{peyre2018comparison}.

While their work is set in the statistical estimation framework and ours in the deterministic 
approximation framework, the papers overlap in one key technical point: both use a 
Benamou--Brenier argument exploiting the lower bound $m$ on the densities to bound 
$\W_p$ in terms of a norm of $f-g$. Their approach yields the sharper Besov norm 
$\|f-g\|_{B^{-1}_{p,1}}$ on the right-hand side, but requires smoothness of the densities, and a wavelet structure on the 
domain, which in turn requires them to construct the vector field solution to the continuity equation. Our \Cref{THM:ApproximationTransfer} instead applies Bogovskii's theorem to 
obtain the bound directly in terms of $\|f-g\|_{\LL_p(\Omega)}$, which is a coarser 
quantity in principle, but holds on any bounded Lipschitz domain (even John domain), does not require that the densities be bounded above, requires no smoothness on the densities, and requires no additional structure on 
the approximation scheme. Additionally, approximation rates are often measured in the $\LL_p$ distance. Moreover, our \Cref{PROP:LinearApproximationCounterexample} 
and their Theorem 10 establish the necessity of 
the lower bound assumption for $p>1$ via different methods.

Finally, we point out works like \cite{baptista2025approximation}, which consider structured approximations not of measures themselves, but of transport maps between fixed measures.

\subsection{Outline}
\Cref{SEC:Background} discusses notation and background on function spaces, optimal transport, and Voronoi cells. \Cref{SEC:ApproximationMain} contains our main approximation results and examples of concrete approximation schemes, and \Cref{SEC:Voronoi} discusses discrete approximations on Voronoi partitions. We end with a brief conclusion.

\section{Background}\label{SEC:Background}

\subsection{Notation}

Throughout the paper, we will use $|\cdot|$ to denote Euclidean distance between vectors, cardinality of discrete sets, and Lebesgue measure of subsets of $\R^d$ where the context will make it clear which one is meant. For a subset $\Omega\subset\R^d$, $\diam(\Omega):=\sup_{x,y\in\Omega} |x-y|$ is its diameter. 

For $\Omega\subset\R^d$, $p\in[1,\infty)$ and $k\in\N$, the Sobolev space $\Sob{k}{p}(\Omega)$ is defined by finiteness of the following norm and its respective seminorm:
\[\|f\|_{\Sob{k}{p}(\Omega)}:= \|f\|_{\LL_p(\Omega)}+|f|_{\Sob{k}{p}(\Omega)},\qquad |f|_{\Sob{k}{p}(\Omega)} := \max_{|\alpha|=k}\|D^\alpha f\|_{\LL_p(\Omega)}\]
where $\alpha$ is a multi-index, and $D^\alpha$ is the corresponding differential operator.  For any $p$, we denote its H\"{o}lder conjugate as $p'$ defined by $\frac1p+\frac{1}{p'}=1$. In applying some results from PDE we assume that our domains $\Omega$ are bounded Lipschitz domains, which is defined as follows.

\begin{definition}\label{DEF:LipschitzDomain}
A nonempty, bounded, open set $\Omega\subset\R^d$ is a \emph{bounded Lipschitz domain} if for every $x_0\in\partial\Omega$ there exist $r>0$, an orthogonal transformation $R\in O(d)$, and a Lipschitz function $\gamma:\R^{d-1}\to\R$ such that, in the rotated coordinates $y = R(x-x_0)$,
\begin{align*}
\Omega\cap\BB(x_0,r) &= \{x\in\BB(x_0,r): y_d>\gamma(y_1,\dots,y_{d-1})\},\\
\partial\Omega\cap\BB(x_0,r) &= \{x\in\BB(x_0,r): y_d=\gamma(y_1,\dots,y_{d-1})\}.
\end{align*}
Equivalently, $\partial\Omega$ is locally the graph of a Lipschitz function with $\Omega$ lying on one side of the graph; see, e.g., \cite[Appendix C.1]{evans2010partial} or \cite[Definition 1.2.1.1]{grisvard1985elliptic}. When $d=1$, this reduces to the condition that $\Omega$ is a finite union of bounded open intervals.
\end{definition}

For the ball of radius $R$ centered at $0$, we use a shorthand notation $\BB_R$. We will use $C$ to denote a positive, finite constant which may change from line to line, and subscripts to denote its dependence on various parameters. We will also use $A \lesssim B$ to mean $A\leq CB$ for some constant $C$, and $A\asymp B$ to mean $B\lesssim A\lesssim B$.

\subsection{Optimal Transport}

For $\Omega\subset\R^d$ and $p\in[1,\infty)$, the Wasserstein-$p$ space, denoted $\W_p(\Omega)$ is the set of probability measures with finite $p$-th moment ($M_p(\mu):=\int_{\Omega}|x|^p\dd\mu<\infty$), equipped with the Wasserstein distance
\begin{equation}\label{EQ:Kantorovich}
    \W_p(\mu,\nu) := \inf_{\pi\in\Pi(\mu,\nu)} \left(\int_{\Omega\times\Omega}|x-y|^p\dd\pi(x,y)\right)^\frac1p,
\end{equation}
where $\mathcal{P}(\Omega\times\Omega)$ is the set of all probability measures over $\Omega\times\Omega$ and $\Pi(\mu,\nu):=\{\pi\in\mathcal{P}(\Omega\times\Omega): \pi(A\times \Omega) = \mu(A),\; \pi(\Omega\times A)=\nu(A) \textnormal{ for all Borel measurable } A\subset\Omega\}$ is the set of all joint probability measures with marginals $\mu$ and $\nu$. The problem \eqref{EQ:Kantorovich} is called the Kantorovich formulation of optimal transport, and an optimizer always exists for $\mu,\nu\in\W_p(\R^d)$, and is called the optimal transport plan \cite{villani2003topics,santambrogio2015optimal}. 

An alternate formulation of the Wasserstein distance is the Benamou--Brenier, or dynamic, formulation \cite{benamou2000computational}. For any bounded, convex $\Omega\subset\R^d$ and $p>1$, one has
\begin{equation}\label{EQN:BenamouBrenier}\W_p(\mu,\nu)^p = \inf_{\substack{\rho_t,v_t \\ \partial_t\rho_t + \nabla\cdot(\rho_t v_t) = 0}}\int_0^1\int_\Omega |v_t(x)|^p\rho_t(x) \dd x\dd t\end{equation}
where $\rho_t$ is a curve in $\mc{P}(\Omega)$ with $\rho_0 = \mu$, $\rho_1=\nu$, $v_t$ is a velocity field, and the continuity equation $\partial_t\rho_t+\nabla\cdot(\rho_t v_t) = 0$ is interpreted in the sense of distributions with Neumann boundary condition $\rho_t v_t\cdot \mathbf{n}|_{\partial\Omega} = 0$ (see, e.g., \cite[Chapters 4 and 5]{santambrogio2015optimal} for details of this formulation). In particular, we note that \cite[Theorem 5.28]{santambrogio2015optimal} does not require any regularity assumption on $v_t$ to be an admissible vector field solving the continuity equation to yield an upper bound on $\W_p(\mu,\nu)$; however, Benamou and Brenier prove that optimal solutions must be gradient vector fields of the form $v_t = \nabla\varphi_t$.

\begin{remark}\label{REM:BBUpperBound}
While Santambrogio's proof of equality in \eqref{EQN:BenamouBrenier} \cite[Theorem 5.28]{santambrogio2015optimal} requires $\Omega$ to be bounded and convex, the upper bound
\[\W_p(\mu,\nu)^p \leq \int_0^1\int_\Omega |v_t(x)|^p\dd\rho_t(x)\dd t\]
holds for any bounded domain $\Omega\subset\R^d$ and any admissible pair $(\rho_t,v_t)$ satisfying the continuity equation \eqref{EQN:ContinuityDiv} with no-flux boundary conditions. To see this, regard $\rho_t$ as a measure on $\R^d$ giving no mass to $\R^d\setminus\Omega$ and extend $v_t$ by $0$ outside $\Omega$, obtaining $(\widetilde\rho_t,\widetilde v_t)$ on $\R^d$. For any $\phi\in \CC^\infty_c(\R^d\times(0,1))$,
\[\int_0^1\int_{\R^d}\bigl(\partial_t\phi + \nabla\phi\cdot\widetilde v_t\bigr)\dd\widetilde\rho_t\dd t = \int_0^1\int_\Omega\bigl(\partial_t\phi + \nabla\phi\cdot v_t\bigr)\dd\rho_t\dd t = 0,\]
so $(\widetilde\rho_t,\widetilde v_t)$ satisfies the continuity equation on $\R^d$ in the sense of distributions. By \cite[Theorem 8.3.1]{ambrosio2008gradient}, $t\mapsto\widetilde\rho_t$ is an absolutely continuous curve in $(\W_p(\R^d),\W_p)$ whose metric derivative satisfies $|\widetilde\rho'|(t) \leq \|\widetilde v_t\|_{\LL_p(\widetilde\rho_t)}$ for a.e.\ $t\in(0,1)$. Consequently, by \cite[Theorem 1.1.2]{ambrosio2008gradient} and the fact that $\|\cdot\|_{\LL_1([0,1])}\leq\|\cdot\|_{\LL_p([0,1])}$,
\begin{align*}
\W_p(\widetilde\rho_0,\widetilde\rho_1) \leq \int_0^1 |\widetilde\rho'|(t)\dd t \leq \int_0^1 \|\widetilde v_t\|_{\LL_p(\widetilde\rho_t)}\dd t 
&\leq \left(\int_0^1\int_{\R^d} |\widetilde v_t|^p\dd\widetilde\rho_t\dd t\right)^{1/p}\\ & = \left(\int_0^1\int_\Omega |v_t|^p\dd\rho_t\dd t\right)^{1/p},
\end{align*}
where the final equality uses that $\widetilde\rho_t$ gives no mass to $\R^d\setminus\Omega$ and $\widetilde v_t = v_t$ on $\Omega$. Finally, note that since $\rho_0,\rho_1$ are supported in $\overline\Omega$, we have $\W_p(\mu,\nu) = \W_p(\rho_0,\rho_1) = \W_p(\widetilde\rho_0,\widetilde\rho_1)$ which yields the claim. Note that equality in \eqref{EQN:BenamouBrenier} may fail on non-convex $\Omega$, see \cite{figalli2010new} for a discussion.
\end{remark}

Given a measurable map $T:\R^d\to\R^d$, we denote by $T_\#\mu$ the pushforward measure which satisfies $T_\#\mu(A) = \mu(T^{-1}(A))$.

\subsection{Nonuniform points and Voronoi partitions}\label{SEC:BackgroundVoronoi}

Let $X:=\{x_i\}_{i=1}^\infty\subset\R^d$ be a set of points. We recall the notion of the \textit{mesh norm} of $X$ given by
\begin{equation}\label{EQ:MeshNorm}h_X:=\sup_{y\in\R^d}\inf_{x_i\in X}|x_i-y|.\end{equation}
The \textit{minimum separation radius} is defined by
\begin{equation}\label{EQN:SeparationRadius}q_X:=\frac12\inf_{i\neq j}|x_i-x_j|.\end{equation}
A point set $X$ is called \textit{quasi-uniform} if $0<q_X\leq h_X<\infty$.

We denote by $V_i\subset\R^d$ the Voronoi cell centered at $x_i$, defined by
\[V_i:=\{x\in\R^d:|x-x_i|\leq|x-x_j|, \textnormal{ for all } j\neq i\},\]which is the set of all points closer to $x_i$ than any other point in $X$. Clearly $\R^d = \bigcup V_i$, but the Voronoi cells can possibly overlap at the boundary. Approximating an absolutely continuous measure piecewise on a Voronoi partition as is done in \Cref{thm:piecewise-approximation-intro} does not require the Voronoi cells to be disjoint (as they overlap on a set of Lebesgue measure 0). However, to approximate a measure which has a singular part, this is an important consideration. Therefore, we note that we may modify the Voronoi cells by defining $W_1=V_1$ and $W_i:=V_i\setminus \bigcup_{j<i}W_j$, $i>1$. Then $W_i\cap W_j=\emptyset$ for all $i\neq j$, and $\R^d=\bigsqcup_i W_i$ (disjoint union). Additionally, $\diam(W_i)\leq \diam(V_i)\leq 2h_X$, which is needed in the proof in \Cref{SEC:Nonuniform}. We assume in the sequel that the Voronoi cells have been modified in this way so that they are pairwise disjoint and their union is all of $\R^d$.

The radius of the Voronoi cell centered at $x_i$ is $\rad(V_i) = \max_{x\in V_i}|x-x_i|$, and satisfies $\rad(V_i)\leq h_X$.

\section{Global approximation rates on \texorpdfstring{$\Omega$}{Omega}}\label{SEC:ApproximationMain}

The goal of this section is to prove when approximation rates transfer from a function approximation scheme to measure approximation, answering Questions \ref{Q:Smoothness} and \ref{Q:Local}.

\subsection{Transfer of approximation rates}

The following simple proposition shows that finding a pointwise upper bound of the Wasserstein distance in terms of the $\LL_p$ norm between the densities is a sufficient condition to imply both local and global approximation power transfer.

\begin{proposition}\label{PROP:ApproximationReduction}
    Let $p\in[1,\infty)$, $\Omega\subset\R^d$, and $\mc{F}, \mc{A}$ be as in \Cref{Q:Smoothness}. If there exists a constant $C<\infty$ such that \begin{equation}\label{EQN:PointwiseSmoothBound}\W_p(f\dd x,g\dd x) \leq C \|f-g\|_{\LL_p(\Omega)}\end{equation} for every $f\in\mc{F}$ and $g\in \mc{A}$, then global approximation power transfers; i.e., \eqref{EQN:ApproximationPower} holds.
    On the other hand, if for every $f\in\mc{F}$, there exists a constant $C_f$ such that \[\W_p(f\dd x,g\dd x)\leq C_f\|f-g\|_{\LL_p(\Omega)}\] for every $g\in\mc{A}$, then local approximation power transfers; i.e., \eqref{EQN:LocalApproximationPower} holds.
\end{proposition}

\begin{proof}
    By assumption, we have
    \[\mc{E}(\mc{M}_{\mc{F}},\mc{M}_{\mc{A}})_{\W_p} = \sup_{f\in\mc{F}}\inf_{g\in\mc{A}} \W_p(f\dd x,g\dd x) \leq \sup_{f\in\mc{F}}\inf_{g\in\mc{A}}C\|f-g\|_{\LL_p(\Omega)} = C\mc{E}(\mc{F},\mc{A})_{\LL_p}.\]
    The proof of the second statement follows along the same lines.
\end{proof}

We first note that we can get sublinear approximation bounds of the forms above by well-known total variation bounds with no assumptions on the densities.

\begin{theorem}\label{THM:TV}
Let $\Omega\subset\R^d$ be bounded. For any $\mc{F},\mc{A}\subset\LL_1(\Omega)$ such that $\int_\Omega f=\int_\Omega g=1$ for all $f\in\mc{F}$, $g\in\mc{A}$, both \[\mc{E}(\mc{M}_{\mc{F}},\mc{M}_{\mc{A}})_{\W_1}\leq C\mc{E}(\mc{F},\mc{A})_{\LL_1}, \quad \textnormal{and}\quad \dist(f\dd x,\mc{M}_{\mc{A}})_{\W_1}\leq C\dist(f,\mc{A})_{\LL_1}\] hold for $C=C_\Omega$ (independent of $f$ in the second bound).

When $p\in(1,\infty)$, we have
\[\mc{E}(\mc{M}_{\mc{F}},\mc{M}_{\mc{A}})_{\W_p} \leq C\mc{E}(\mc{F},\mc{A})_{\LL_p}^\frac1p,\quad \textnormal{and}\quad \dist(f\dd x,\mc{M}_{\mc{A}})_{\W_p}\leq C\dist(f,\mc{A})_{\LL_p}^\frac1p,\]
for some $C=C_{\Omega,p}$ (independent of $f$ in the second bound).
\end{theorem}

\begin{proof}
All of the bounds above follow from the fact that the $\W_p$ norm is bounded above by the total variation norm \cite[Theorem 6.15]{villani2008optimal}, which implies that \[\W_1(f\dd x,g\dd x)\leq\frac12\diam(\Omega)\|f-g\|_{\LL_1(\Omega)},\] and for $p\in(1,\infty)$, \begin{multline}\W_p(f\dd x,g\dd x)\leq 2^\frac{1}{p'}\diam(\Omega)\|f\dd x-g\dd x\|_{\TV} = 2^{\frac{1}{p'}-1}\diam(\Omega)\|f-g\|_{\LL_1(\Omega)}^\frac1p \\ \leq 2^{-\frac{1}{p}}\diam(\Omega)|\Omega|^\frac{1}{pp'}\|f-g\|_{\LL_p(\Omega)}^\frac1p.\end{multline} Appealing to \Cref{PROP:ApproximationReduction} implies the desired result.
\end{proof}

Note that \Cref{THM:TV} gives the desired transfer of approximation rates in the case $p=1$. However, without further assumptions on the densities, the above rates for $p\in(1,\infty)$, i.e., $\W_p(f\dd x,g\dd x)\leq C\|f-g\|_{\LL_p(\Omega)}^\frac1p$, are sharp, and we cannot ask for $\W_p(f\dd x,g\dd x)\leq C\|f-g\|_{\LL_p(\Omega)}$. We capture this as follows.

\begin{proposition}\label{PROP:LinearApproximationCounterexample}
Let $p\in(1,\infty)$, and $\Omega\subset\R^d$ be bounded with nonempty interior. Then there exist densities $\{f_\eps\}_{\eps>0}$, $\{g_\eps\}_{\eps>0}$ and $C>0$ such that, for all sufficiently small $\eps$, \[\W_p(f_\eps \dd x,g_\eps \dd x)\leq C\|f_\eps-g_\eps\|_{\LL_p(\Omega)}^\frac1p,\] but for which
\[\frac{\W_p(f_\eps \dd x,g_\eps \dd x)}{\|f_\eps-g_\eps\|_{\LL_p(\Omega)}^\alpha}\to\infty,\quad \eps\to0^+,\] for every $\alpha>\frac1p.$ In particular, there is no constant $C$ such that $\W_p(f_\eps \dd x,g_\eps \dd x)\leq C\|f_\eps-g_\eps\|_{\LL_p(\Omega)}$ for all sufficiently small $\eps>0$.
\end{proposition}
The proof of this proposition is given in \Cref{APP:ProofCounterexample}.

\subsection{Smooth Approximations}

Returning to \Cref{Q:Smoothness} for $p\neq1$, to achieve linear approximation rates of the form $\mc{E}(\mathcal{M}_{\mc{F}},\mc{M}_{\mathcal{A}})_{\W_p} \leq C \mc{E}(\cF,\mathcal{A})_{\LL_p}$ (i.e., without the $1/p$ power on the right-hand side), we must make more assumptions on the densities and the domain $\Omega$ involved as seen from \Cref{PROP:LinearApproximationCounterexample}. We present our main result, which has very minor restrictions on $\Omega$ and the densities.

\begin{theorem}\label{THM:ApproximationTransfer}
Let $p\in(1,\infty)$, and $\Omega\subset\R^d$ be a bounded Lipschitz domain if $d\geq2$; if $d=1$, let $\Omega=(a,b)$ for some $a<b$. Let $\mc{F},\mc{A}\subset \LL_p(\Omega)$ be such that all $f\in\mc{F}$ and $g\in\mc{A}$ satisfy $\int_\Omega f=\int_\Omega g=1$ and are bounded away from $0$; i.e., there exists an $m>0$ such that
    \[f(x),g(x) \geq m \qquad \textnormal{for a.e. } x\in\Omega.\]
    Then both \[\mc{E}(\mathcal{M}_{\mc{F}},\mc{M}_{\mathcal{A}})_{\W_p} \leq C \mc{E}(\cF,\mathcal{A})_{\LL_p}\quad \textnormal{and}\quad\dist(f\dd x,\mc{M}_{\mc{A}})_{\W_p} \leq C\dist(f,\mc{A})_{\LL_p}\] for some $C=C_{\Omega,d,p,m}$ (independent of $f$ in the second bound).
\end{theorem}

\begin{proof}
Let $f\in\mc{F}$ and $g\in\mc{A}$ be fixed, and define \[\rho_t\dd x = [(1-t)f+tg]\dd x, \qquad t\in[0,1].\] We will slightly abuse notation and use $\rho_t$ to denote the curve in $\W_p(\Omega)$ and also the density given above. Note that the assumptions on $f$ and $g$ imply that $\rho_t\geq m>0$. With this admissible choice of $\rho_t$, we look for a solution $E_t$ to the continuity equation in the Benamou--Brenier formulation of the form
\begin{equation}\label{EQN:ContinuityDiv}
\begin{cases}\nabla\cdot E_t = f-g, & \textnormal{in  } \Omega,\\ E_t\cdot\mathbf{n}=0,& \textnormal{on  } \partial\Omega.\end{cases}\end{equation}
Given a solution to \eqref{EQN:ContinuityDiv}, we set $v_t = E_t/\rho_t$, which is well-defined since $\rho_t\geq m>0$ almost everywhere.

For ambient dimension $d\geq 2$, Bogovskii \cite{bogovskii1979solution} (see \cite[Chapter III]{galdi2011introduction} for the formulation used here) proves that for bounded Lipschitz $\Omega\subset\R^d$ and $h\in\LL_p(\Omega)$ with $\int_\Omega h=0$, there exists a distributional solution $E_t\in \Sob{1}{p}_0(\Omega)$ to
\[\begin{cases} -\nabla\cdot E_t = h, & \textnormal{in } \Omega,\\
E_t\cdot\mathbf{n} = 0, & \textnormal{on } \partial\Omega\end{cases}
\]
that satisfies $|E_t|_{\Sob{1}{p}_0(\Omega)}\leq C_{\Omega,d,p}\|h\|_{\LL_p(\Omega)}$, which implies, on account of the Poincar\'{e} inequality,
\begin{equation}\label{EQN:BogovskiiBound}
    \|E_t\|_{\Sob{1}{p}(\Omega)} \leq C\|h\|_{\LL_p(\Omega)}.
\end{equation}
We apply this result to \eqref{EQN:ContinuityDiv} noting that $\int_\Omega (f-g) = 0$ since both are densities, and set $v_t = E_t/\rho_t$. By \Cref{REM:BBUpperBound}, \eqref{EQN:BenamouBrenier} provides an upper bound for $\W_p(f\dd x,g\dd x)^p$, so applying \eqref{EQN:BogovskiiBound} and $\rho_t\geq m$ yields
\begin{align*}\W_p(f\dd x,g\dd x)^p & \leq \int_0^1\int_\Omega|v_t(x)|^p\rho_t(x)\dd x\dd t\\
& = \int_0^1\int_\Omega |E_t(x)|^p\rho_t(x)^{1-p}\dd x\dd t\\
& \leq m^{1-p}\|E_t\|_{\LL_p(\Omega)}^p\\
& \leq Cm^{1-p}\|f-g\|_{\LL_p(\Omega)}^p,
\end{align*}
whence taking $p$-th roots and relabeling the constant $C$ yields \eqref{EQN:PointwiseSmoothBound}, and thus \eqref{EQN:ApproximationPower} and \eqref{EQN:LocalApproximationPower}, on account of \Cref{PROP:ApproximationReduction}.

The case $d=1$ is simpler: since $\Omega=(a,b)$, the continuity equation becomes $E_t' = f-g$ on $(a,b)$, and $E_t(x) := \int_a^x(f-g)\dd y$ satisfies the boundary conditions $E_t(a) = 0$ and $E_t(b) = \int_a^b(f-g)\dd y =0$ (since $f,g$ are densities). Thus $E_t\in \Sob{1}{p}_0((a,b))$, and elementary estimates give $\|E_t\|_{\LL_p((a,b))}\leq (b-a)\|E_t'\|_{\LL_p((a,b))} = (b-a)\|f-g\|_{\LL_p((a,b))}$. The remainder of the proof follows along the same lines as above.
\end{proof}

\begin{remark}\label{REM:mDependence}
Note that the constants in the upper bounds in \Cref{THM:ApproximationTransfer} are of the form $C_{\Omega,d,p}m^{\frac1p-1}= C_{\Omega,d,p}m^{-\frac{1}{p'}}$ where $p'$ is the H\"{o}lder conjugate of $p$. This explicit form for the reliance on $m$ will be important for applications of this theorem in the sequel.
\end{remark}

The assumptions of \Cref{THM:ApproximationTransfer} are essentially minimal, at least as far as utilizing \Cref{PROP:ApproximationReduction} to draw the conclusion. If $\Omega$ is unbounded, then one can simply take $f = \one_Q$ where $Q = [0,1]^d$, and $g = f(\cdot-t)$; then $\W_p(f\dd x,g\dd x) = |t|\to\infty$, whereas $\|f-g\|_{\LL_p} = 2^\frac1p$, so \eqref{EQN:PointwiseSmoothBound} cannot hold. \Cref{PROP:LinearApproximationCounterexample} shows the necessity of $f$ and $g$ being bounded away from $0$. Bogovskii's theorem allows for arbitrary $\LL_p(\Omega)$ densities, whereas elliptic PDE techniques would require some smoothness (e.g., $\CC^1$ \cite[Chapter 9]{gilbarg1977elliptic}). One can relax the assumption that $\Omega$ is a Lipschitz domain. While Bogovskii's original theorem \cite{bogovskii1979solution} is stated for finite sums of domains that are starlike with respect to a ball, Acosta, Dur\'an, and Muschietti \cite{acosta2006solutions} established that the divergence equation admits a solution with the $\Sob{1}{p}_0$ bound \eqref{EQN:BogovskiiBound} on any bounded John domain, which is a strictly larger class than bounded Lipschitz domains (whose boundaries may be much worse than Lipschitz). Consequently, \Cref{THM:ApproximationTransfer} remains valid for any bounded John domain $\Omega$.

\subsection{Lower bounds}\label{SEC:LowerBound}

Here we consider a lower bound on $\W_p(f\dd x, g\dd x)$ in terms of a negative Sobolev norm. \Cref{THM:LowerBound} extends negative Sobolev bounds of Peyre \cite{peyre2018comparison} (see also \cite{maury2010macroscopic}) to $p\neq2$, requiring a modified formulation of the negative Sobolev spaces. The construction of the spaces involved is slightly different from the usual formulation of negative Sobolev space (see, e.g., \cite{adams2003sobolev}), although it follows similar lines. The spaces formed below account for zero-mean functions, which are natural to consider in the setting of differences of densities. The proof technique here is known and follows the lines of \cite{maury2010macroscopic,niles2022minimax}; our formulation is to make concrete the subspace of the Sobolev space involved.

\begin{definition}
Let $p\in(1,\infty)$ and $\Omega\subset\R^d$ be bounded and convex. Let
\[
\Sob{1}{p}_{\diamond}(\Omega)
:=
\left\{
h\in \Sob{1}{p}(\Omega): \int_\Omega h\dd x=0
\right\},
\]
which we equip with the norm $\|h\|_{\Sob{1}{p}_\diamond(\Omega)}:=\|\nabla h\|_{\LL_p(\Omega)}.$
Let $\Sob{-1}{p}_\diamond(\Omega)$ be the dual space $(\Sob{1}{p'}_\diamond(\Omega))^*$ equipped with the usual dual norm.
\end{definition}

Note that the gradient norm defined above on $\Sob{1}{p}_\diamond(\Omega)$ is equivalent to the usual Sobolev norm when restricted to the set of zero-mean functions on account of the Poincar\'{e} inequality. In particular,
$\Sob{1}{p}_\diamond(\Omega)$ is a Banach space, and therefore
$\Sob{-1}{p}_\diamond(\Omega)$ is a Banach space. 

For mean-zero $h\in\LL_p(\Omega)$, we identify it with its functional $T_h\in \Sob{-1}{p}_\diamond(\Omega)$ via $T_h(\phi):=\int_\Omega h\phi$ for $\phi\in \Sob{1}{p'}_\diamond(\Omega)$, and with a slight abuse of notation write
\begin{equation}\label{EQ:NegativeSobolevNorm}
\|h\|_{\Sob{-1}{p}_\diamond(\Omega)}
:=
\sup\left\{
\left|\int_\Omega h\phi\dd x\right|:
\phi\in \Sob{1}{p'}_\diamond(\Omega),\ 
\|\nabla \phi\|_{\LL_{p'}(\Omega)}\leq 1
\right\}.
\end{equation}

\begin{remark}
The space $\Sob{-1}{p}_\diamond(\Omega)$ should be understood as a homogeneous
negative Sobolev space. It is the natural analogue, for general $p$, of the
space $\dot H^{-1}(\Omega)$ obtained by dualizing the space of zero-mean
$H^1$ functions endowed with the norm $\|\nabla\cdot\|_{\LL_2(\Omega)}$ (see \cite[Section 5.5.2]{santambrogio2015optimal} for a discussion).

This is different from the standard negative Sobolev space
$\Sob{-1}{p}(\Omega):=(\Sob{1}{p'}_0(\Omega))^*$, where the test functions have zero trace, since test functions in \eqref{EQ:NegativeSobolevNorm} instead have zero-mean. This formulation is well suited to optimal transport estimates
since differences of probability densities have zero-mean, and this formulation is the natural equivalent to that of $\dot{H}^{-1}(\Omega)$ studied by Peyre \cite{peyre2018comparison}.
\end{remark}

\begin{proposition}\label{THM:LowerBound}
Let $p\in(1,\infty)$, and let $\Omega\subset\R^d$ be bounded and convex. Let
$f,g\in\LL_p(\Omega)$ satisfy $\int_\Omega f=\int_\Omega g=1$ and $0\leq
f(x),g(x)\leq M<\infty$ for a.e.\ $x\in\Omega$. Then
\begin{equation}\label{EQN:LowerBound}
M^{-\frac{1}{p'}}\|f-g\|_{\Sob{-1}{p}_\diamond(\Omega)} \leq \W_p(f\dd x,g\dd x).
\end{equation}
If additionally $f(x),g(x)\geq m>0$ for a.e. $x\in\Omega$, then we have
\[M^{-\frac{1}{p'}}\|f-g\|_{\Sob{-1}{p}_\diamond(\Omega)} \leq \W_p(f\dd x,g\dd x) \leq C_{\Omega,d,p}m^{-\frac{1}{p'}}\|f-g\|_{\LL_p(\Omega)}.\]
\end{proposition}

\begin{proof}
The upper bound in the second part comes directly from \Cref{THM:ApproximationTransfer,REM:mDependence}. To see the lower bound, let $\phi\in\Sob{1}{p'}_\diamond(\Omega)$ with $\|\nabla \phi\|_{\LL_{p'}(\Omega)}\leq1,$ and note that by \cite[Lemma 3.4 and ensuing remark]{maury2010macroscopic},

\[
\left|\int_\Omega\phi(f-g)\dd x\right|\leq M^{\frac{1}{p'}}\W_p(f\dd x,g\dd x).
\]
Taking supremum over all admissible $\phi$ gives the desired bound on account of \eqref{EQ:NegativeSobolevNorm}.

\end{proof}

Niles--Weed and Berthet prove a similar lower bound to \eqref{EQN:LowerBound} with a negative Besov space norm on the left-hand side in the case $\Omega=[0,1]^d$ (\!\!\cite[Theorem 4]{niles2022minimax}). Their bound also contains an additional constant factor depending on the wavelet construction of the Besov space. The following proposition shows that our technique results in a bound that is bounded below by theirs up to a constant factor. The negative Besov space there is denoted by $B_{p,\infty}^{-1}(\Omega)$, and can be defined to be the dual (\!\cite[Theorem 7.1.7]{triebel1973spaces}) of what we denote by $B_1^1(\LL_{p'}(\Omega))$ in \Cref{SEC:Johnson} following the convention of DeVore and Sharpley \cite{devore1993besov}. The precise definitions of the spaces are unimportant at present, so we defer them to the sequel.

\begin{proposition}\label{PROP:BesovSobolevEmbed}
Let $p\in(1,\infty)$, and let $\Omega\subset\R^d$ be a bounded Lipschitz domain.
If $h\in\LL_p(\Omega)$ with $\int_\Omega h\dd x=0$, then
\[
\|h\|_{B^{-1}_{p,\infty}(\Omega)}\leq C_{\Omega,d,p}\|h\|_{\Sob{-1}{p}_\diamond(\Omega)}.
\]
\end{proposition}
\begin{proof}
    See \Cref{APP:PropBesovProof}.
\end{proof}

\subsection{Approximation Scheme Examples}\label{SEC:Examples}

Here we provide some concrete approximation schemes that satisfy the conditions of \Cref{THM:ApproximationTransfer}. The main difficulty is that much of function approximation theory does not enforce that the approximant have the same integral as the target function. Additionally, making the approximation scheme positivity-preserving is often not required, although it has been studied \cite{de2017positive,nochetto2002positivity}.  We begin by noting that one can modify an approximation to an density $f\in\LL_p(\Omega)$ by shifting and renormalizing it in a natural way, and we find that the error incurred by shifting and normalizing can be bounded by a constant factor of the $\LL_p$ norm of the target function. 

\begin{theorem}\label{THM:clipped-positivity}
Let $p\in(1,\infty)$ and $\Omega\subset\R^d$ be a bounded Lipschitz domain.
Let $f\in \LL_p(\Omega)$ satisfy $\int_\Omega f=1$ and $f(x)\geq m > 0$ for a.e. $x\in\Omega.$
Let $A:\LL_p(\Omega)\to \LL_p(\Omega)$ be an approximation scheme with $\|A(f)\|_{\LL_p(\Omega)}\leq C_A\|f\|_{\LL_p(\Omega)}$, and define
\[
S_A(f)(x) := \max\{A(f)(x), m\}, 
\qquad
Z := \int_\Omega S_A(f)(x)\dd x,
\qquad
\widetilde{A}(f) := Z^{-1} S_A(f).
\]
Then there exists a constant $C_{\Omega,d,p,m,A}$ such that
\[
\W_p(f\dd x, \widetilde{A}(f)\dd x)
   \leq C_{\Omega,d,p,m,A}\|f\|_{\LL_p(\Omega)}^{1+\frac{1}{p'}}
      \|f-A(f)\|_{\LL_p(\Omega)}.
\]
\end{theorem}

\begin{proof}
First, note that almost everywhere, $|f-S_A(f)|\leq |f-A(f)|$ since $S_A(f)$ is formed by clipping $A(f)$ when it is below the lower bound on $f$. Therefore
\begin{equation}\label{eq:clip-Lp}
\|f - S_A(f)\|_{\LL_p(\Omega)} \leq \|f - A(f)\|_{\LL_p(\Omega)}.
\end{equation} 
By definition, $Z = \int_\Omega S_A(f)\geq m|\Omega|$, and by H\"older's inequality and \eqref{eq:clip-Lp},
\begin{equation}\label{eq:clip-Z-diff}
|Z - 1| = \left|\int_\Omega (S_A(f) - f)\right|
  \leq \|f - S_A(f)\|_{\LL_1(\Omega)}
  \leq |\Omega|^{1-\frac{1}{p}}\|f - A(f)\|_{\LL_p(\Omega)}.
\end{equation}
Combining these yields
\begin{equation}\label{eq:clip-Zinv}
|1 - Z^{-1}| = \frac{|Z - 1|}{Z}
  \leq \frac{|\Omega|^{-\frac{1}{p}}}{m}\|f - A(f)\|_{\LL_p(\Omega)}.
\end{equation}
Additionally, from \eqref{eq:clip-Lp}, we have
\begin{equation}\label{eq:clip-S-Lp}
\|S_A(f)\|_{\LL_p(\Omega)} \leq \|f\|_{\LL_p(\Omega)} + \|f - A(f)\|_{\LL_p(\Omega)}.
\end{equation}

Now write
\[
f - \widetilde{A}(f) = (f - S_A(f)) + (1 - Z^{-1})S_A(f).
\]
By the triangle inequality and \eqref{eq:clip-Lp}--\eqref{eq:clip-S-Lp},
\begin{align}
\|f - \widetilde{A}(f)\|_{\LL_p(\Omega)}
  &\leq \|f - S_A(f)\|_{\LL_p(\Omega)}
       + |1 - Z^{-1}|\|S_A(f)\|_{\LL_p(\Omega)} \nonumber\\
  &\leq \|f - A(f)\|_{\LL_p(\Omega)}
       + \frac{|\Omega|^{-\frac{1}{p}}}{m}\|f - A(f)\|_{\LL_p(\Omega)}
         \big(\|f\|_{\LL_p(\Omega)} + \|f - A(f)\|_{\LL_p(\Omega)}\big) \nonumber\\
  &= \left(1 + \frac{|\Omega|^{-\frac{1}{p}}}{m}
     \|f\|_{\LL_p(\Omega)}\right)\|f - A(f)\|_{\LL_p(\Omega)}
     + \frac{|\Omega|^{-\frac{1}{p}}}{m}\|f - A(f)\|_{\LL_p(\Omega)}^2.
     \label{eq:clip-Lp-final}
\end{align}
Since $f \geq m$ on $\Omega$, we have $\|f\|_{\LL_p(\Omega)} \geq m|\Omega|^{\frac{1}{p}}$,
whence $1 \leq (m|\Omega|^{\frac{1}{p}})^{-1}\|f\|_{\LL_p(\Omega)}$. Substituting
into~\eqref{eq:clip-Lp-final} and collecting terms yields
\begin{equation}\label{eq:clip-Lp-final2}
\|f - \widetilde{A}(f)\|_{\LL_p(\Omega)}
  \leq \frac{2|\Omega|^{-\frac1p}}{m}\|f\|_{\LL_p(\Omega)}\|f - A(f)\|_{\LL_p(\Omega)}
     + \frac{|\Omega|^{-\frac1p}}{m}\|f - A(f)\|_{\LL_p(\Omega)}^2.
\end{equation}
Next, we note that 
$\|f-A(f)\|_{\LL_p(\Omega)} \leq (1+C_A)\|f\|_{\LL_p(\Omega)}$, which implies that the second term of \eqref{eq:clip-Lp-final2} can be absorbed into the first term giving
\begin{equation}\label{eq:clip-Lp-final3}
\|f - \widetilde{A}(f)\|_{\LL_p(\Omega)}
  \leq C\|f\|_{\LL_p(\Omega)}\|f - A(f)\|_{\LL_p(\Omega)},\end{equation}
  for a constant $C$ depending on $\Omega, d, p,$ and $A.$

Finally, we verify the hypotheses of \Cref{THM:ApproximationTransfer}. Both $f$ and
$\widetilde{A}(f)$ lie in $\LL_p(\Omega)$ with $\int_\Omega f = \int_\Omega \widetilde{A}(f) = 1$.
Moreover, $f \geq m > 0$ a.e., and
\begin{align}\label{EQN:ZUpperBound}
Z  = \|S_A(f)\|_{\LL_1(\Omega)} & \leq \|A(f)\|_{\LL_1(\Omega)} + m|\Omega| \nonumber\\ & \leq |\Omega|^{1-\frac1p}\|A(f)\|_{\LL_p(\Omega)}+m|\Omega|\nonumber \\ & \leq C_A|\Omega|^{1-\frac1p}\|f\|_{\LL_p(\Omega)}+m|\Omega|\end{align}
which is finite, hence
\[
\widetilde{A}(f)(x) = Z^{-1}S_A(f)(x) \geq \frac{m}{Z} > 0 \quad \text{a.e.}
\]
so we can apply \Cref{THM:ApproximationTransfer} with $\mc{F}=\{f\}$ and $\mc{A}=\{\widetilde{A}(f)\}$ with the common lower bound
$\widetilde{m} := \min\{m, m/Z\}>0$ to get
\[
\W_p(f\dd x, \widetilde{A}(f)\dd x)
  \leq C\|f - \widetilde{A}(f)\|_{\LL_p(\Omega)},
\]
where $C = C_{\Omega, p, \widetilde{m},A}$. From \Cref{THM:ApproximationTransfer}, the dependence on $\widetilde{m}$ in $C_{\Omega,d,p,\widetilde{m},A}$ is of the order $m^{-\frac{1}{p}}Z^{\frac{1}{p'}}$, hence has a dependence on $f$ of $\|f\|_{\LL_p(\Omega)}^{\frac{1}{p'}}$ by \eqref{EQN:ZUpperBound}. Combining this with \eqref{eq:clip-Lp-final3} yields
\[
\W_p(f\dd x, \widetilde{A}(f)\dd x)
  \leq C_{\Omega,d,p,A}\|f\|_{\LL_p(\Omega)}^{1+\frac{1}{p'}}
     \|f - A(f)\|_{\LL_p(\Omega)},
\]
which is the claimed estimate.

\end{proof}

The case $p=1$ can be stated somewhat differently, as we need not have $f$ bounded away from 0 and $\Omega$ need not have Lipschitz boundary on account of \Cref{THM:TV}.

\begin{theorem}\label{THM:L1Clip}
Let $\Omega\subset\R^d$ be bounded. Let $f\in\LL_1(\Omega)$ satisfy $\int_\Omega f=1$
and $f(x)\geq 0$ for a.e. $x\in\Omega$. Let $A:\LL_1(\Omega)\to\LL_1(\Omega)$ be an approximation scheme such that $\|f-A(f)\|_{\LL_1(\Omega)}<1$.
Define
\[
S_A(f)(x) := \max\{A(f)(x),0\},\quad Z := \int_\Omega S_A(f)(x)\dd x,
\quad \widetilde{A}(f) := Z^{-1}S_A(f).
\]
Then there exists a constant $C_{\Omega}$ such that
\[
\W_1(f\dd x,\widetilde{A}(f)\dd x) 
  \leq C_{\Omega}\|f-A(f)\|_{\LL_1(\Omega)}.
\]
\end{theorem}
\begin{proof}
    The proof follows along the lines of the proof of \Cref{THM:clipped-positivity}, but since $f$ is not bounded away from $0$, the assumption $\|f-A(f)\|_{\LL_1(\Omega)}<1$ implies
    \[Z \geq \int_\Omega f-\|f-S_A(f)\|_{\LL_1(\Omega)} \geq 1-\|f-A(f)\|_{\LL_1(\Omega)}>0.\]
    Thus $\widetilde{A}(f)$ is well defined. Now write $f-\widetilde A(f) = \bigl(f-S_A(f)\bigr) + \bigl(S_A(f)-\widetilde A(f)\bigr)$ and notice that $\|S_A(f)\|_{\LL_1(\Omega)}=Z$ since $S_A(f)\geq0$. Thus we have
\[\|S_A(f)-\widetilde A(f)\|_{\LL_1(\Omega)} = |1-Z^{-1}|\,\|S_A(f)\|_{\LL_1(\Omega)} = |1-Z^{-1}|\,Z = |Z-1|.\]
Since $|Z-1| = \bigl|\int_\Omega(S_A(f)-f)\bigr| \leq \|S_A(f)-f\|_{\LL_1(\Omega)} \leq \|f-A(f)\|_{\LL_1(\Omega)}$,
\[\|f-\widetilde A(f)\|_{\LL_1(\Omega)} \leq \|f-S_A(f)\|_{\LL_1(\Omega)} + |Z-1| \leq 2\|f-A(f)\|_{\LL_1(\Omega)}.\]
Applying \Cref{THM:TV} gives the desired conclusion.

\end{proof}

While the former theorems actually modify the approximation space by clipping the approximation, if one can guarantee good $\LL_\infty$ approximation, then one can guarantee the same approximation rate as above without modifying the approximation scheme beyond normalization. 

\begin{theorem}\label{thm:positivity-no-clipping}
Let $p\in(1,\infty)$, and $\Omega\subset\R^d$ be a bounded Lipschitz domain. Let 
$f\in \LL_p(\Omega)$ satisfy $\int_\Omega f = 1$ and $f(x)\ge m>0$ for a.e.
$x\in\Omega$. Let $A:\LL_p(\Omega)\to\LL_p(\Omega)$ be an approximation 
scheme such that
\[
\|f-A(f)\|_{\LL_\infty(\Omega)} \leq \frac{m}{2}.
\]
Define
\[
Z = \int_\Omega A(f)(x)\dd x,
\quad\textnormal{and}\quad
\widetilde{A}(f) = Z^{-1} A(f).
\]
Then there exists a constant $C_{\Omega,d,p,m}>0$ such that
\[
\W_p(f\dd x, \widetilde{A}(f)\dd x)
   \le C_{\Omega,d,p,m}\|f\|_{\LL_p(\Omega)}
      \|f-A(f)\|_{\LL_p(\Omega)}.
\]
If $p=1$, then the assumption that $\Omega$ is Lipschitz can be removed, and $\|f\|_{\LL_1(\Omega)}=1$ in the final bound.
\end{theorem}

\begin{proof}
The hypothesis $\|f-A(f)\|_{\LL_\infty(\Omega)} \le m/2$ implies
\[
A(f)(x) \ge f(x) - \|f-A(f)\|_{\LL_\infty(\Omega)} \ge m - \frac{m}{2} = \frac{m}{2}
\quad\text{for a.e. } x\in\Omega,
\]
so $A(f)$ is strictly positive. Since $A(f)\ge m/2$,
\begin{equation}\label{eq:Z-lower-no-clip}
Z = \int_\Omega A(f)(x)\dd x \ge \frac{m}{2}|\Omega|.
\end{equation}
Note that similar to $\eqref{eq:clip-Z-diff}$ 
\[Z\leq 1+|\Omega|\|f-A(f)\|_{\LL_\infty(\Omega)}\leq 1+\frac{m|\Omega|}{2}.\]
Consequently, 
\begin{equation}\label{EQN:AtildeLinftyBound}
\widetilde{A}(f)(x) \geq \frac{m}{2Z}\geq \frac{m/2}{1+m|\Omega|/2}=:\widetilde{m}.
\end{equation}
The remainder of the proof follows the same reasoning as \Cref{THM:clipped-positivity} \textit{mutatis mutandis} but with \eqref{EQN:AtildeLinftyBound} implying that the dependence on $\widetilde{m}$ results only in extra dependence on $m$ and $\Omega$, but not $\|f\|_{\LL_p(\Omega)}$ or the operator $A$ (other than the assumption that $\|f-A(f)\|_{\LL_\infty(\Omega)}\leq \frac{m}{2}$).
\end{proof}

The remainder of this section describes some examples of concrete approximation schemes whose known rates of approximation on functions of various smoothness can be transferred to measure approximation via the above results.

\subsubsection{Taylor Polynomial Approximation}
Recall that if $f\in \CC^{k+1}(\Omega)$, then Taylor's remainder theorem states that for $x_0\in\Omega$, the Taylor polynomial \[T_kf(x):= \sum_{|\alpha|\leq k}\frac{D^\alpha f(x_0)}{\alpha!}(x-x_0)^\alpha \in\CC^{k+1}(\Omega)\]
satisfies 
\begin{equation}\label{EQN:TaylorRemainder}
    \|f-T_kf\|_{\LL_\infty(\Omega)} \leq \frac{\diam(\Omega)^{k+1}}{(k+1)!}\max_{|\alpha|=k+1}\|D^\alpha f\|_{\LL_\infty(\Omega)}. 
\end{equation}

\begin{corollary}\label{COR:TaylorExample}
    With the notations and assumptions of \Cref{thm:positivity-no-clipping}, suppose $f\in\CC^{k+1}(\Omega)$ and $f(x)\geq m>0$ with $\frac{\diam(\Omega)^{k+1}}{(k+1)!}\max_{|\alpha|=k+1}\|D^\alpha f\|_{\LL_\infty(\Omega)}\leq \frac{m}{2}.$ Then defining $Z := \int_\Omega T_kf$ and $\widetilde{T}_kf:= Z^{-1}T_kf$ yields the following for all $p\in[1,\infty)$:
    \[    \W_p(f\dd x,\widetilde{T}_kf\dd x) \leq C_{f,\Omega,p}\frac{|\Omega|^{k+1}}{(k+1)!}.\]
\end{corollary}
\begin{proof}
The case $p=1$ follows from the TV bound. To see the case $p\neq1$, apply \Cref{thm:positivity-no-clipping} with \eqref{EQN:TaylorRemainder}. 
\end{proof}

\subsubsection{Nonstationary Shift-invariant Space Approximation}\label{SEC:Johnson}

Shift-invariant spaces have been studied in approximation theory for some time, especially in the context of radial basis function (RBF) approximation, e.g., \cite{de1994approximation,aldroubi2001nonuniform,jia1997shift,buhmann2000radial,johnson1997approximation} (this is far from an exhaustive list, but comprises many of the classical results in the area). Given a generator $\phi:\R^d\to\R$, the corresponding \textit{principal shift-invariant space} in $\LL_p(\R^d)$ is defined by
\[S_p(\phi):=\left\{\sum_{j\in\Z^d}c_j\phi(\cdot-j):(c_j)\in\ell_p\right\}.\]
Note that some works define the space by $\overline{\sspan}^{\LL_p}\{\phi(\cdot-j):j\in\Z^d\},$ but for smoothly decaying generators, the definitions coincide.

For $h>0$, we define $S_p^h(\phi):= \{f(\cdot/h):f\in S_p(\phi)\}$, and the collection of shift-invariant spaces $\{S^h(\phi)\}_{h>0}$ is called the \textit{stationary ladder} of shift-invariant spaces with the generator $\phi$. Note that the generator in this case is independent of the scale parameter $h$. Global approximation rates in shift-invariant may be characterized in terms of the Strang--Fix conditions and the structure of the Fourier transform of the generator \cite{jia1997shift,kyriazis1995approximation}. 

A different phenomenon appears in \emph{nonstationary ladders}, where the 
generator $\phi_h$ itself depends on $h$. The family 
$\{S_p^h(\phi_h)\}_{h>0}$ may then achieve approximation rates unattainable by any 
fixed kernel. Johnson's proof of this for nonstationary Gaussian ladders~\cite{johnson1997approximation} is a 
standard example. To utilize his result, we must first define Besov spaces of fractionally smooth functions. 

The Besov space $B_1^{\alpha}(\LL_p(\Omega))$ can be defined as follows \cite{devore1993besov}. Let $\Omega\subset\R^d$ be open, $r\in\N$, and $y\in\R^d$. Let $\tau_y:\LL_p\to\LL_p$ be the translation operator given by $\tau_y f := f(\cdot+y)$, $\id$ be the identity operator, and $\Delta_y^r:=(\tau_y-\id)^r$ be the difference operator of order $r$, and define 
\[\Delta_y^r(f,x,\Omega):= \begin{cases} \Delta_y^r(f,x), & x, x+y, \dots,x+ry\in\Omega,\\ 0, & \textnormal{otherwise.}\end{cases}\]
Then the modulus of smoothness of $f\in\LL_p(\Omega)$ of order $r$ is
\[\omega_r(f,t,\Omega)_p:= \sup_{|y|\leq t}\|\Delta_y^r(f,\ \cdot\ ,\Omega)\|_{\LL_p(\Omega)}.\]
For $r>\alpha>0$, the Besov space is defined by finiteness of the following quasi-norm:
\[\|f\|_{B_1^{\alpha}(\LL_p(\Omega))}:= \int_0^1t^{-\alpha-1}\omega_r(f,t,\Omega)_p \dd t + \|f\|_{\LL_p(\Omega)}.\]
Note that different $r>\alpha$ values give equivalent norms, and there are various different equivalent definitions of Besov spaces \cite{sawano2018theory}.

Two important notes for our purposes are that if $\Omega$ is bounded, then $\|f\|_{B_1^{\alpha}(\LL_p(\Omega))} \leq |\Omega|^\frac1p \|f\|_{B_1^{\alpha}(\LL_\infty(\Omega))}$ by H\"{o}lder's inequality; that is, $B_1^\alpha(\LL_\infty(\Omega))$ continuously embeds into $B_1^\alpha(\LL_p(\Omega))$. Second, DeVore and Sharpley \cite{devore1984maximal}
show that for bounded Lipschitz domains, for every $p\in(1,\infty)$ $f\in B_1^{\alpha}(\LL_p(\Omega))$, there exists an extension $E(f)\in B_1^{\alpha}(\LL_p(\R^d))$ such that $E(f)|_\Omega = f$, and \begin{equation}\label{EQN:BesovExtension}
\|E(f)\|_{B_1^{\alpha}(\LL_p(\R^d))}\leq C_E\|f\|_{B_1^{\alpha}(\LL_p(\Omega))}.
\end{equation}

To describe Johnson's result, for $h\in(0,1]$ and fixed smoothness parameter $\alpha>0$, we define the Gaussian generator in the Fourier domain as
\[\widehat{\phi_h}(\xi) := e^{-\alpha(1-\log(h))|\xi|^2}\]
with the convention $\widehat{f}(\xi):=\int_{\R^d}f(x)e^{-2\pi i \bracket{x,\xi}}\dd x$.

Johnson~\cite{johnson1997approximation} shows that, for all $p\in[1,\infty]$, there exists a constant $C$ such that
\begin{equation}\label{EQN:Johnson}
\inf_{s_h \in S_p^h(\phi_h)}
\|f - s_h\|_{\LL_p(\R^d)}
\leq
Ch^{\alpha}\|f\|_{B_{1}^\alpha(\LL_p(\R^d))},
\qquad
h\in(0,1].
\end{equation}
Since \Cref{THM:ApproximationTransfer} requires that functions be bounded away from 0 on $\Omega$, but Johnson's result holds for fully supported functions in the shift-invariant ladder, to state our result about densities, we first consider densities in $B_1^\alpha(\LL_p(\Omega))$, which we then extend to all of $\R^d$, form the approximant in $S_p^h(\phi_h)$, and then restrict the approximant to $\Omega$. Since $S_p^h(\phi_h)$ is invariant to multiplication by constants, normalizing a given $s_h$ by its integral over $\Omega$ is no problem. For simplicity, we write $S_p^h(\phi_h)|_\Omega := \{f|_\Omega: f\in S_p^h(\phi_h)\}$, and write $S_p^h(\phi_h)|_\Omega \dd x$ to denote an element of $\mc{P}(\Omega)$ with a density in $S_p^h(\phi_h)|_\Omega$.

\begin{corollary}\label{COR:JohnsonExample}
    Take the notations and assumptions of \cref{THM:ApproximationTransfer,thm:positivity-no-clipping}, and let $p\in[1,\infty)$, $\alpha>0$. For every $f\in B_1^\alpha(\LL_\infty(\Omega))$ with $\int_\Omega f=1$ and $f\geq m>0$ for a.e. $x\in\Omega$,
    \[\dist(f\dd x, S_p^h(\phi_h)|_\Omega \dd x)_{\W_p(\Omega)} = O(\dist(f,S_p^h(\phi_h)|_\Omega)_{\LL_p(\Omega)}) = O(h^\alpha), \quad h\to0,\]
    where the implied constant in the first big-$O$ is $C_{\Omega,d,p,m}\|f\|_{\LL_p(\Omega)}$, and the second contains a factor of $\|f\|_{B_1^\alpha(\LL_p(\Omega))}$.
\end{corollary}
\begin{proof}
The case $p=1$ follows from the TV bound, so we give the proof for $p\neq1$. We first establish the second big-$O$ statement. Let $E(f)$ be the extension of $f$ to $B_1^\alpha(\LL_p(\R^d))$, and let $\chi:\R^d\to\R^d$ be a smooth cutoff function such that $\chi(x)=1$ on $\Omega$, and $\chi(x) = 0$ outside some neighborhood of $\Omega$. Then $\chi E(f)\in B_1^\alpha(\LL_p(\R^d))$ and
\begin{equation}\label{EQN:BesovExtensionTruncated}
    \|\chi E(f)\|_{B_1^\alpha(\LL_p(\R^d))} \leq C_\chi\|E(f)\|_{B_1^\alpha(\LL_p(\R^d))} \leq C_\chi C_E\|f\|_{B_1^\alpha(\LL_p(\Omega))}
\end{equation}
on account of \cite[Theorem 2.28]{Triebel2020Theory}.

Now for any $s_h\in S_p^h(\phi_h)$,
\[\|f-s_h|_\Omega\|_{\LL_p(\Omega)} = \|E(f)-s_h|_\Omega\|_{\LL_p(\Omega)} = \|\chi E(f)-s_h|_\Omega\|_{\LL_p(\Omega)} \leq \|\chi E(f)-s_h\|_{\LL_p(\R^d)}.\]
Since $\Omega$ is bounded, $B_1^\alpha(\LL_\infty(\Omega))\subset B_1^\alpha(\LL_p(\Omega))$, so the extension is well-defined. Taking infimum over $s_h$ and applying Johnson's result \eqref{EQN:Johnson} and \eqref{EQN:BesovExtensionTruncated}, we have
\begin{equation}\label{EQN:JohnsonLpRate}
\dist(f,S_p^h(\phi_h)|_\Omega)_{\LL_p(\Omega)} 
     \lesssim \|f\|_{B_1^\alpha(\LL_p(\Omega))}h^\alpha,
\end{equation}
which completes this part of the proof.

We now establish the first big-$O$ statement using \Cref{thm:positivity-no-clipping}. The approximant to $\chi E(f)$ is constructed as
\[
s_h = \sum_{k=0}^n g_k(2^{n-k}\cdot) *_{h2^{n-k}} f_k,
\]
where $n = n(h)$ is the largest integer with $2^n h \leq 1$, the $f_k$ come from the Littlewood--Paley decomposition of $\chi E(f)$ (defined via a dyadic partition of unity in the Fourier domain), and each $g_k \in S_1(\phi_{r_k})$ where $r_k$ is chosen so that $\|\varphi - g_k(2^{n-k}\cdot)\|_{\LL_\infty(\R^d)} \leq C2^{-\alpha(n-k)}$. Here $\varphi$ is a fixed Schwartz function and $*_{h2^{n-k}}$ denotes semi-discrete convolution on the lattice $h2^{n-k}\Z^d$.  

For $\chi E(f)\in B_1^\alpha(\LL_p(\R^d))$, $\{f_k\}\subset\ell_p$, and since $g_k\in S_1(\phi_{r_k})$, $s_h\in S_p(g_k)\subset S_p(\phi_h)$ since $\ell_1\ast \ell_p\subset\ell_p$ by Young's inequality.

Finally, for the Gaussian generator, condition~(2.4) of~\cite{johnson1997approximation} is verified with $p=\infty$ (Example~4.1), so the error bound~(6.3) therein yields, for the \emph{same} approximant $s_h\in S_p^h(\phi_h)$,
\begin{equation}\label{EQN:JohnsonLinftyRate}
\|f-s_h|_\Omega\|_{\LL_\infty(\Omega)} \leq \|\chi E(f)-s_h\|_{\LL_\infty(\R^d)} \lesssim \|f\|_{B_1^\alpha(\LL_\infty(\Omega))}h^\alpha.
\end{equation}
In particular, for $h\leq h_0(f,m,\Omega,\alpha)$ sufficiently small, $\|f-s_h|_\Omega\|_{\LL_\infty(\Omega)}\leq m/2$. Setting $A(f):= s_h|_\Omega$, $Z:=\int_\Omega A(f)$, and $\widetilde{A}(f):= Z^{-1}A(f)$, an application of \Cref{thm:positivity-no-clipping} yields
\[\dist(f\dd x, S_p^h(\phi_h)|_\Omega \dd x)_{\W_p(\Omega)} \lesssim \|f\|_{\LL_p(\Omega)}\dist(f,S_p^h(\phi_h)|_\Omega)_{\LL_p(\Omega)},\]
which completes the proof.
\end{proof}

\begin{remark}\label{RMK:JohnsonLpBesov}
The assumption $f\in B_1^\alpha(\LL_\infty(\Omega))$ in \Cref{COR:JohnsonExample} is used solely to obtain the $\LL_\infty$ bound \eqref{EQN:JohnsonLinftyRate} so that we can use \Cref{thm:positivity-no-clipping} to avoid modifying the approximation space. If one only assumes $f\in B_1^\alpha(\LL_p(\Omega))$, the $\LL_p$ rate \eqref{EQN:JohnsonLpRate} still holds, but there is no guarantee that the approximant from $S_p^h(\phi_h)|_\Omega$ is bounded away from $0$. In this case, one may apply \Cref{THM:clipped-positivity} instead, at the cost of two modifications: first, the approximant is clipped from below at the value $m$, so the resulting density $\widetilde{A}(f)$ need no longer lie in $S_p^h(\phi_h)|_\Omega$, and the approximation is instead from a modified space; second, the implied upper bound will contain a factor of $\|f\|_{\LL_p(\Omega)}^{1+\frac{1}{p'}}$ rather than $\|f\|_{\LL_p(\Omega)}$ owing to the use of \Cref{THM:clipped-positivity}.
\end{remark}

\subsubsection{Cardinal Interpolation with Gaussians}\label{SEC:GaussianCardinal}

Taking a somewhat different convention from Johnson's paper, let $\phi_h(x) = e^{-h|x|^2}$ for $h>0$ and $x\in\R^d$. Hangelbroek, Madych, Narcowich, and Ward \cite[Theorem 2.1]{hangelbroek2012cardinal} prove the following.
\begin{theorem}[\!\!{\cite[Theorem 2.1]{hangelbroek2012cardinal}}]\label{THM:HMNW}
For every $f\in \Sob{k}{p}(\R^d)$, $h\in(0,1]$ there is a \textit{cardinal Gaussian interpolant} $I_hf\in S_p^h(\phi_{h^2})$ such that $I_hf|_{h\Z^d} = f|_{h\Z^d}$ and
\[\|I_hf-f\|_{\Sob{k}{p}(\R^d)} \lesssim \begin{cases} \|f\|_{\Sob{k}{p}(\R^d)}h^k, & p\in(1,\infty), k>d/p\\
\|f\|_{\Sob{k}{p}(\R^d)}(1+|\log h|)^dh^k, & p= 1, k\geq d, \textnormal{ or } p=\infty, k>0.\end{cases} \]
\end{theorem}

Note that if $\Omega\subset\R^d$ is a bounded Lipschitz domain, then there exists a bounded extension operator $E:\Sob{k}{p}(\Omega)\to\Sob{k}{p}(\R^d)$ \cite{jones1981quasiconformal}. So we may use an argument similar to that above for optimal approximation rates for nonstationary Gaussian shift-invariant ladders to prove a result on cardinal interpolation by Gaussians as follows.

\begin{corollary}\label{COR:CardinalGaussian}
   Take the joint notations and assumptions of \Cref{THM:ApproximationTransfer,thm:positivity-no-clipping}, and let $p, k, d$ be as in \Cref{THM:HMNW}. If $\Omega$ has Lipschitz boundary, then for every $f\in \Sob{k}{\infty}(\Omega)$ with $\int_\Omega f=1$ and $f\geq m>0$ a.e., there exists a density $I_h f\in S_p^h(\phi_{h^2})|_\Omega$ such that $I_hf|_{h\Z^d}=f_{h\Z^d}$ and, for sufficiently small $h>0$,
   \[\W_p(f\dd x, I_hf\dd x) \lesssim \begin{cases}
       \|f\|_{\Sob{k}{p}(\R^d)}h^k, & p\in(1,\infty), \ k>d/p\\
\|f\|_{\Sob{k}{p}(\R^d)}(1+|\log h|)^dh^k, & p= 1, \ k\geq d.\end{cases} \]
\end{corollary}
\begin{proof}
    The proof follows as that of \Cref{COR:JohnsonExample} \textit{mutatis mutandis} by applying the extension operator from $\Sob{k}{p}(\Omega)\to\Sob{k}{p}(\R^d)$, the fact that $\Sob{k}{\infty}(\Omega)\subset\Sob{k}{p}(\Omega)$ since $\Omega$ is bounded, and \Cref{THM:HMNW} to \Cref{thm:positivity-no-clipping}. Note that the extra integrability assumption that $f\in \Sob{k}{\infty}(\Omega)$ is for similar reasons that one needs $\|I_hf-f\|_{\LL_\infty(\Omega)}\leq m/2$ to apply \Cref{thm:positivity-no-clipping}, which requires the use of \Cref{THM:HMNW} for $p=\infty$ (not requiring any adjustment for $k$).
\end{proof}

\begin{remark}
    Similar results for cardinal interpolation with general inverse multiquadrics in $S_p^h(\phi_{\beta,h})$ with $\phi_{\beta,h}(x) := (|x|^2+h^{-2})^\beta$, for fixed $\beta < -d-\frac12$, can be obtained similarly from \cite[Theorem 3.5]{hamm2018cardinal}. The only difference in the rates compared to \Cref{COR:CardinalGaussian} is that there is no power on the $(1+|\log h|)$ term for the case $p=1$.
\end{remark}

It should be noted that the procedure illustrated above by these examples is quite general, and therefore many known results in RBF approximation can be transferred in this manner to approximation of densities; e.g., surface splines as in \cite{johnson20012}, and exponential box splines \cite{johnson1997approximation}, among others.

\subsection{Application to kernel density estimation}\label{SEC:KDEExample}

As a final application of \Cref{THM:ApproximationTransfer} we show that our approximation theory can give Wasserstein distance bounds for random approximation schemes by kernel density estimators (KDEs). In particular, we show that our transfer principle (\Cref{THM:ApproximationTransfer}) converts any
$\LL_p$ risk bound for a density estimator into a $\W_p$
convergence rate (either in expectation or probability). The main point of the following proposition is that the estimator, the smoothness class, and the rate
enter only through a single risk bound; nothing else about the specific construction is
utilized.

In what follows, for an estimator $\tilde f$ built from $n$ i.i.d.~samples of $f$, we write the risk as
\[
    R_n^{(p,q)}(\tilde f, f) := \left(\E\,\|\tilde f - f\|_{\LL_p(\Omega)}^q\right)^{1/q},
\]
where the expectation is taken over the i.i.d.~samples of $f$.

\begin{proposition}\label{PROP:StochasticTransfer}
Let $\Omega\subset\R^d$ be a bounded Lipschitz domain, $p\in(1,\infty)$, and $q\geq 1$. Let $f$ be a probability
density on $\Omega$ with $f\geq m>0$ a.e. Let $(\widehat f_n)_{n\in\N}$ be a sequence
of random measurable functions on $\Omega$ and $(\psi_n)_{n\in\N}\subset\R_+$ be such that
\[
    R_n^{(p,q)}(\widehat f_n, f) \leq \psi_n .
\]
Define
\[
    u_n := \max\Big\{\widehat f_n,\tfrac m2\Big\},\qquad
    Z_n := \int_\Omega u_n,\qquad
    \widetilde f_n := Z_n^{-1}u_n .
\]
Then there exist constants $C,c_0>0$, depending on $\Omega,d,p$ and $m$, such that for every
$\delta\in(0,1)$ and every $n$ with $\psi_n\,\delta^{-\frac1q}\leq c_0$,
\[
    \W_p\big(f\dd x,\widetilde f_n\dd x\big)\leq C\psi_n\,\delta^{-\frac1q}
    \qquad\text{with probability at least } 1-\delta.
\]
\end{proposition}

\begin{proof}
Write $E_n:=\|\widehat f_n-f\|_{\LL_p(\Omega)}$, $r_{n,\delta}:=\psi_n\,\delta^{-1/q}$, and note that Markov's inequality implies that
\[
    \Prob(E_n> r_{n,\delta})
    = \Prob\left(E_n^q> \psi_n^q\,\delta^{-1}\right)
    \leq \frac{\E[E_n^q]}{\psi_n^q\,\delta^{-1}}
    \leq \delta,
\]
so $\Prob(\|\widehat f_n-f\|_{\LL_p(\Omega)}\leq r_{n,\delta})\geq1-\delta$. The remainder of the proof holds on the event $\Omega_n:=\{\|\widehat f_n-f\|_{\LL_p(\Omega)}\leq r_{n,\delta}\}.$ 

Now mimic the proof of \Cref{THM:clipped-positivity} and notice that, pointwise, $|u_n-f|\leq|\widehat f_n-f|$. Hence, $\|u_n-f\|_{\LL_p(\Omega)}\leq E_n$, and by H\"{o}lder's inequality,
\[
    |Z_n-1|=\Big|\int_\Omega(u_n-f)\Big|\leq |\Omega|^{\frac{1}{p'}}\|u_n-f\|_{\LL_p(\Omega)}\leq |\Omega|^{\frac{1}{p'}}E_n.
\]
Choosing $c_0:=\min\{1,\frac12|\Omega|^{-\frac{1}{p'}}\}$, the condition $r_{n,\delta}\leq c_0$
gives $E_n\leq 1$ and $Z_n\leq \tfrac32$, whence
\[
    \widetilde f_n = Z_n^{-1}u_n \geq \frac m3>0\quad\text{a.e.}
\]
As in the proof of \Cref{THM:clipped-positivity}, from this we obtain the bound
$\|f-\widetilde f_n\|_{\LL_p(\Omega)}\lesssim E_n$, whereupon application of \Cref{THM:ApproximationTransfer} with $\mc{F}=\{f\}$ and $\mc{A} = \{\widetilde{f}_n\}$ yields
\[
    \W_p\left(f\dd x,\widetilde f_n\dd x\right)
    \lesssim \|f-\widetilde f_n\|_{\LL_p(\Omega)}
    \lesssim E_n \lesssim r_{n,\delta},
\]
which is the desired conclusion.
\end{proof}

Note that the construction above gives a probabilistic bound, but the same method of proof yields the expectation bound
\[\E\W_p(f\dd x,\widetilde{f}_n\dd x)\lesssim \E\|f-\widehat f_n\|_{\LL_p(\Omega)}=R_n^{(p,1)}(\widehat f_n,f)\]
under the same assumptions.

Much of the KDE literature has focused on estimating densities on $\R^d$ or on an interval $[a,b]\subset\R$. When estimating a density on a bounded subdomain of $\R^d$, standard KDEs often suffer from so-called \textit{boundary bias}, in which the estimator may exhibit slower decay near the boundary for a density that does not decay at the boundary. Since our framework requires our densities to be bounded below, we must consider estimators that mitigate this boundary bias. Some approaches to this are through reflection \cite{schuster1985incorporating,silverman1986kernel}, and asymmetric kernels whose support adapts to the domain. The latter is natural in our setting given that the densities do not vanish at the boundary, and asymmetric kernels are constructed to mitigate the boundary bias without extending beyond the domain. In dimension one, the beta kernel of Chen \cite{chen1999beta} provides such an estimator on $[0,1]$, while Bouezmarni and Rombouts \cite{bouezmarni2010nonparametric} construct boundary bias free estimators on $Q_d:=[0,1]^d$ by using beta kernels.

We now provide a couple of examples of applications of \Cref{PROP:StochasticTransfer}. In each case the only input is an $\LL_p$ risk bound on a domain where $f$ is bounded away from 0. Our first example is an adaptive boundary KDE of Bertin et al.~\cite{bertin2019adaptive}, who consider a boundary corrected
KDE with adaptive bandwidth selection that approximates a density $f$ on the unit cube $Q_d$ from $n$ i.i.d. samples $x_1,\dots,x_n$. The isotropic boundary kernel density estimator $\widehat{f}^{\text{iso}}$
therein is a data-driven estimator on
$Q_d$ that jointly selects both a kernel order and a bandwidth, building on works by Goldenshluger and Lepski \cite{goldenshluger2014adaptive}. We do not reproduce its construction here, but refer the interested reader to \cite[Section 3]{bertin2019adaptive}
for details. The key result we use is their Theorem 4, which says that for any $s>0$, $L > 0$, and $p, q \geq 1$,
\begin{equation}\label{EQN:BertinTheorem4}
    \limsup_{n\to\infty} n^{\frac{s}{2s+d}}
    \sup_{f \in \widetilde S_{s,p}(L)} R_n^{(p,q)}\big(\widehat{f}^{\text{iso}}, f\big) < \infty.
\end{equation}
Here, $\widetilde S_{s,p}(L)$ denotes the isotropic Sobolev--Slobodetskii
ball (roughly, functions whose derivatives of order $\leq \lfloor s\rfloor$ are in $\LL_p(Q_d)$ and whose Gagliardo semi-norm with parameters $s,p$ is bounded by $L$ \cite[Definition 2]{bertin2019adaptive}). The authors also note that \eqref{EQN:BertinTheorem4} holds for densities in an anisotropic H\"{o}lder space when $s>\frac1p$. 

\begin{corollary}\label{COR:KDE}
Let $p\in(1,\infty)$, $s>0$, $q\geq1$, and $L,m>0$. Let $f\in\widetilde S_{s,p}(L)$
on $Q_d$ satisfy $\int_{Q_d}f=1$ and $f\geq m$ a.e., let $x_1,\dots,x_n$ be
i.i.d.\ from $f\dd x$, and let $\widehat f^{\mathrm{iso}}$ be the isotropic boundary
kernel density estimator of \cite[Section~3.2]{bertin2019adaptive}. Form
$\widetilde f_n$ as in \Cref{PROP:StochasticTransfer}. Then there exist $C>0$ and
$N_0\in\N$, depending on $d,p,q,s,L,m$, such that for every $\delta\in(0,1)$ and
every $n\geq N_0$ with $C_0\,\delta^{-1/q}n^{-s/(2s+d)}\leq c_0$,
\[
    \W_p\big(f\dd x,\widetilde f_n\dd x\big)\leq C\,\delta^{-1/q}\,n^{-\frac{s}{2s+d}}
    \qquad\text{with probability at least } 1-\delta.
\]
\end{corollary}

\begin{proof}
By \cite[Theorem~4]{bertin2019adaptive}, \eqref{EQN:BertinTheorem4} holds, so there
exist $C_0,N_0$ with $R_n^{(p,q)}(\widehat f^{\mathrm{iso}},f)\leq C_0n^{-\frac{s}{2s+d}}$
for $n\geq N_0$. Apply \Cref{PROP:StochasticTransfer} with $\psi_n=C_0n^{-\frac{s}{2s+d}}$.
\end{proof}

\begin{remark}\label{REM:KDEDiscussion}
By \cite[Theorem~4 lower bound]{bertin2019adaptive}, the rate
$n^{-\frac{s}{2s+d}}$ is minimax optimal for $\LL_p(Q_d)$ density estimation
over $\widetilde S_{s,p}(L)$. The transfer of this rate to $\W_p$ via
\Cref{THM:ApproximationTransfer} does not, however, recover the
$\W_p$-minimax rate $n^{-\frac{1+s}{2s+d}}$ established in
\cite[Theorem~3]{niles2022minimax}. The gap reflects the fact that our technique of transferring $\LL_p$ bounds to $\W_p$ does not optimally exploit smoothness information in the densities. \Cref{COR:KDE} is nonetheless interesting in that it shows our framework can be applied to data-driven estimators.
\end{remark}

Our second eample is a bound in expectation for a multivariate beta kernel from the work of Bouezmarni and Rombouts \cite{bouezmarni2010nonparametric}. One can turn the following into a probabilistic bound in the same way as above.

\begin{corollary}\label{COR:KDEBeta}
Let $m>0$ and let $f\in\CC^2(Q_d)$ satisfy $\int_{Q_d}f=1$ and $f\geq m$
a.e. Let
$\widehat f_n$ be the multivariate beta-kernel density estimator of
\cite{bouezmarni2010nonparametric} with bandwidth $b\asymp n^{-\frac{2}{4+d}}$, and form
$\widetilde f_n$ as in \Cref{PROP:StochasticTransfer}. Then there is a constant
$C=C_{d,m}$ such that for all sufficiently large $n$,
\[
    \E\,\W_2\big(f\dd x,\widetilde f_n\dd x\big)\leq C\,n^{-\frac{2}{4+d}} .
\]
\end{corollary}

\begin{proof}
By \cite{bouezmarni2010nonparametric}, the beta-kernel estimator is free of
boundary bias and its mean integrated squared error satisfies
$\E\|\widehat f_n-f\|_{\LL_2(Q_d)}^2 = O\left(n^{-\frac{4}{4+d}}\right)$. By Jensen's
inequality, $R_n^{(2,1)}(\widehat f_n,f)\leq\big(\E\|\widehat f_n-f\|_{\LL_2(Q_d)}^2\big)^{\frac12}
= O\left(n^{-\frac{2}{4+d}}\right)$. Apply the expectation form of
\Cref{PROP:StochasticTransfer} with $p=2$ and $\psi_n\asymp n^{-\frac{2}{4+d}}$.
\end{proof}

\subsection{Piecewise approximation of densities}

As discussed previously, sometimes it is desirable to break up approximation of a density on a domain $\Omega$ into smaller pieces (e.g., if $\Omega$ is large). Thus, we want to understand how to locally approximate the density $f$ by some approximation scheme (e.g., piecewise polynomial approximation rather than global polynomial approximation). We prove below a general theorem in this vein, but do not give further specific examples, as it is a general framework, and any examples from the previous list would work here.

\begin{theorem}
\label{thm:piecewise-approximation}
Let $p \in (1,\infty)$, and $\Omega \subset \R^d$ be a bounded Lipschitz domain. Let $f \in \LL_p(\Omega)$
satisfy $\int_\Omega f = 1$ and $f(x) \geq m > 0$ for a.e. $x \in \Omega$. Let
$X = \{x_i\}_{i=1}^N \subset \Omega$ be quasi-uniform with associated Voronoi partition
$\{V_i\}_{i=1}^N$ of $\Omega$.

For each $i$, let $A_i(f) \in \LL_p(V_i)$ be a local approximation of $f|_{V_i}$
satisfying
\[
  \|A_i(f) - f\|_{\LL_\infty(V_i)} \leq \frac{m}{2}.
\]
Define $g_0 \in \LL_p(\Omega)$ by $g_0(x) := A_i(f)(x)$ for $x \in V_i$, set
$Z := \int_\Omega g_0 \dd x$, and $g := Z^{-1} g_0$. Then $\int_\Omega g \dd x = 1$,
$g \geq m/(2Z) > 0$ a.e., and
\[
  \W_p(f \dd x, g \dd x)
  \leq C_{\Omega,d,p,m} \|f\|_{\LL_p(\Omega)}\left(\sum_{i=1}^N \|f - A_i(f)\|^p_{\LL_p(V_i)}\right)^{\frac1p}.
\]
If $p=1$, then $\Omega$ need not be Lipschitz, and $\|f\|_{\LL_1(\Omega)}=1$ in the final bound.
\end{theorem}

\begin{proof}
First note that for a.e. $x\in\ V_i$,
\[
  g_0(x) = A_i(f)(x)
  \geq f(x) - \|A_i(f) - f\|_{\LL_\infty(V_i)}
  \geq m - \frac{m}{2} = \frac{m}{2},
\]
hence $g_0 \geq m/2 > 0$ a.e.\ on $\Omega$. In particular,
$Z = \int_\Omega g_0 \dd x \geq \frac{m}{2}|\Omega| > 0$, so $g = Z^{-1}g_0$ is
well-defined, satisfies $\int_\Omega g \dd x = 1$, and $g(x) \geq m/(2Z) > 0$ a.e.

Next, since $\bigsqcup_i V_i = \Omega$,
\begin{equation}\label{eq:g0bound}
  \|f - g_0\|^p_{\LL_p(\Omega)}
  = \sum_{i=1}^N \int_{V_i} |f(x) - A_i(f)(x)|^p \dd x
  = \sum_{i=1}^N \|f - A_i(f)\|^p_{\LL_p(V_i)}
  =: E^p.
\end{equation}
Thus by H\"{o}lder's inequality,
\[
  |Z - 1|
  = \left|\int_\Omega (g_0 - f)\dd x\right|
  \leq \|g_0 - f\|_{\LL_1(\Omega)}
  \leq |\Omega|^{1 - \frac1p} E,
\]
so
\begin{equation}
  \label{eq:Z-bound}
  |1 - Z^{-1}| = \frac{|Z-1|}{Z} \leq \frac{2|\Omega|^{-\frac1p}}{m} E.
\end{equation}
Moreover, by the triangle inequality and \eqref{eq:g0bound},
\begin{equation}
  \label{eq:g0-norm}
  \|g_0\|_{\LL_p(\Omega)} \leq \|f\|_{\LL_p(\Omega)} + E.
\end{equation}

Now we have $f - g = (f - g_0) + (1 - Z^{-1})g_0$, whereby \eqref{eq:g0bound}--\eqref{eq:g0-norm} imply
\[
  \|f - g\|_{\LL_p(\Omega)}
  \leq E + \frac{2|\Omega|^{-1/p}}{m} E \left(\|f\|_{\LL_p(\Omega)} + E\right).
\]
Since $f \geq m$ on $\Omega$ we have $\|f\|_{\LL_p(\Omega)} \geq m|\Omega|^{1/p}$,
so $1 \leq (m|\Omega|^{1/p})^{-1}\|f\|_{\LL_p(\Omega)}$. Substituting and
collecting terms yields
\begin{equation}
  \label{eq:lp-after-correction}
  \|f - g\|_{\LL_p(\Omega)}
  \leq C_{\Omega,p,m}\|f\|_{\LL_p(\Omega)} E
\end{equation}
where we also used the fact that $E\leq C_{\Omega,p,m}$ by the $\LL_\infty$ bound in the assumptions..

Finally, we have $f,g\in \LL_p(\Omega)$ with unit integral and $f,g \geq \widetilde{m}>0$ where $\widetilde{m}:=\min\{m,\frac{m}{2Z}\}$ depends only on $\Omega$, $p$, and $m$.  Therefore, applying \Cref{thm:positivity-no-clipping} gives
\[
  \W_p(f \dd x, g \dd x) \leq C_{\Omega,d,p,\widetilde{m}} \|f - g\|_{\LL_p(\Omega)} \leq C_{\Omega,d,p,m}\|f\|_{\LL_p(\Omega)}E,
\]
which completes the proof for $p\neq1.$ For $p=1$ the proof follows from simple modifications of the bounds above but applying \Cref{THM:TV} to conclude the upper bound on $\W_1$.
\end{proof}

Again, the local $\LL_\infty$ condition is used solely to guarantee positivity of $g_0$
without modifying the approximation space. If only $\LL_p$ local bounds are
available, one may instead define $g_0(x) := \max\{A_i(f)(x), m\}$ on each $V_i$
and apply \Cref{THM:clipped-positivity} in place of \Cref{thm:positivity-no-clipping} in the final step, at the cost of replacing
$\|f\|_{\LL_p}$ by $\|f\|_{\LL_p}^{1+\frac{1}{p'}}$ in the bound.

\begin{remark}\label{REM:EUpperBound}
From the error term $E$ in \Cref{thm:piecewise-approximation}, we can see a more nuanced view of the trade-off of the local approximation rate on each Voronoi cell with the volume of the Voronoi cell. Recall that $A_i(f)$ approximates $f|_{V_i}$, and suppose that for each $i$
\begin{align*}
    \| A_i(f) - f \|_{\LL_\infty(V_i)} \leq \frac{m_i}{2},
\end{align*}
with $\max_i m_i\leq m$. Then the error term can be bounded as
\[
    E^p = \sum_{i=1}^N \|f - A_i(f) \|_{\LL_p(V_i)}^p  \leq \sum_{i=1}^N \| f - A_i(f) \|_{\LL_\infty(V_i)}^p |V_i| \leq \sum_{i=1}^N \left( \frac{m_i}{2} \right)^p|V_i|.
\]
Now if $r_i = \rad(V_i)$, then we have $|V_i|\leq |B(x_i,r_i)| = C_dr_i^d$ (with $C_d = \pi^{d/2}/\Gamma(d/2+1)$). Thus
\begin{equation}\label{EQN:EBound} E \leq \frac{C_d^\frac1p}{2} \left(\sum_{i=1}^Nm_i^pr_i^d\right)^\frac1p.\end{equation}
Note that $r_i\leq h_X$, so this bound can be simplified to 
\[E\leq \frac12 C_d^\frac1p mh_X^\frac{d}{p}N^\frac1p.\]
However, \eqref{EQN:EBound} illustrates that there is a computational tradeoff between the $\LL_\infty$ approximation with the Lebesgue measure of $V_i$. In particular, it suggests that in an adaptive scheme, one can 
balance computational cost against approximation accuracy by choosing finer 
meshes and/or higher-order approximators in regions where $f$ is difficult to 
approximate, and coarser meshes and/or cheaper approximators where $f$ is well 
behaved, with the bound providing a concrete budget for how 
these local choices contribute to the global error $E$.

\end{remark}

\section{Structured Discrete Approximations}\label{SEC:Voronoi}

In this section, we discuss discrete, $N$-term approximations to measures subordinate to a fixed set of quasi-uniform centers. We focus on the most general case of nonuniform approximation on $\R^d$ first, and specialize it to approximation at full-rank lattices, which is similar to the notion of lattice quantizers in \cite[Section 8.3]{graf2007foundations}, and then to finite point sets. To wit, let $X:=\{x_i\}_{i=1}^\infty\subset\R^d$ be a set of points that is quasi-uniform: $0<q_X\leq h_X<\infty$, where $q_X$ and $h_X$ are defined by \eqref{EQN:SeparationRadius} and \eqref{EQ:MeshNorm}, respectively. Let $\{V_i\}_{i=1}^\infty$ be the Voronoi cells associated with $X$.

We consider a general type of approximation as follows:
\begin{equation}\label{EQN:CameraGeneral}\mu_X:=\sum_{i=1}^\infty \alpha_i\tau_i\end{equation}
where each $\tau_i$ is a measure supported on $V_i$ and $\alpha_i\geq0$. An obvious choice to make $\mu_X$ a probability measure is to take $\alpha_i = \mu(V_i)/\tau_i(V_i)$, where we take the convention that $\mu(V_i)/\tau_i(V_i) = 0$ if the denominator is 0. 

Two special cases of this approximation scheme are motivated by orthographic projection camera models and correspond to piecewise constant approximation:
\[\sum_{i=1}^\infty\beta_i\delta_{x_i},\quad \textnormal{and}\quad \sum_{i=1}^\infty\gamma_i\one_{V_i}.\]

Cameras naturally map light intensities to a uniform grid in $\R^2$ (though we present our results in $\R^d$ for generality). In an orthographic projection camera model, one assumes the existence of a camera plane onto which the scene is projected orthogonally, in contrast to a perspective camera. Such a model is approximately accurate for objects at a medium distance from the camera \cite{stockman2001computer}. The models above form a measure from a camera image by either assigning the pixel intensity value $\beta_i$ to a Dirac spike at the center of the grid point on the camera plane, or by spreading it out over the whole pixel (Voronoi cell) as $\gamma_i\one_{V_i}(\cdot)$. If the scene being imaged is normalized and represented as a density, then both of these approximations are analogous to piecewise constant approximations of the density, although the measures are constructed differently. We will see later that both of these schemes, and the more general model \eqref{EQN:CameraGeneral} exhibit the same rate of approximations as the empirical measure for compactly supported measures under some assumptions on the separation radius and mesh norm of $X$. Throughout this section, we will assume that the Voronoi cells associated with $X$ are pairwise disjoint as mentioned in \Cref{SEC:Background}.

\subsection{Main Technical Lemma}

Our main argument for showing the approximation power of the schemes introduced above involves forming a nonoptimal coupling (in the Kantorovich sense) between $\mu$ and $\mu_X$ and estimating the cost of the coupling in the Kantorovich formulation of $\W_p$. This requires the estimation in the following lemma. Here, we use $\|x\|_{\LL_\infty(V_{i})}$ to be the $\LL_\infty$ norm of the identity function $f(x)=x$ on a Voronoi cell, which in this case is $\max_{x\in V_{i}}|x|$. 

\begin{lemma}\label{LEM:RadialEstimatesNonuniform}
Let $\mu\in \W_p(\R^d)$, $p\in[1,\infty)$. Let $X\subset\R^d$ be such that $h_X<\infty$, with Voronoi cells $\{V_i\}_{i=1}^\infty$. Then the following hold:
\begin{enumerate}[(i)]
\item\label{ITEM:NonuniformSumIntegral} $\displaystyle \sum_{i=1}^\infty|x_i|^p\mu(V_{i})\leq 2^{p-1}h_X^p+2^{p-1}M_p(\mu),$

\bigskip
\item\label{ITEM:NonuniformSumInfty} $\displaystyle\sum_{i=1}^\infty\|x\|_{\LL_\infty(V_{i})}^p\mu(V_{i})\leq 2^{p-1}\sum_{i=1}^\infty|x_i|^p\mu(V_{i}) + 2^{p-1}h_X^{p}$,

\bigskip
\item\label{ITEM:NonuniformCombined} $\displaystyle\sum_{i=1}^\infty\|x\|_{\LL_\infty(V_{i})}^p\mu(V_{i})\leq (2^{2p-2}+2^{p-1})h_X^p + 2^{2p-2}M_p(\mu)$.
\end{enumerate}
\end{lemma}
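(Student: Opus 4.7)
The plan is to mirror the proof of Lemma \ref{LEM:RadialEstimates} essentially line-for-line, with the role of $h\,\rad(V_0)$ played by the mesh norm $h_X$. The one substantive observation I need is the following: for any point $x$ in the Voronoi cell $V_i$ of $x_i$, we have $|x - x_i| = \inf_{j} |x - x_j| \leq h_X$, directly from the definition of $h_X$ as a sup over $\R^d$ of the distance to the nearest point of $X$. In particular, $V_i \subset B(x_i, h_X)$, so the ``radius'' of every Voronoi cell about its center is bounded by the single scalar $h_X$ (this is the nonuniform replacement for $\rad(V_{h\lambda}) \leq h\,\rad(V_0)$).

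With this in hand, for (\ref{ITEM:NonuniformSumIntegral}), I would apply the triangle inequality $|x_i| \leq |x| + |x - x_i| \leq |x| + h_X$ for every $x \in V_i$, raise to the $p$-th power and use the elementary $(a+b)^p \leq 2^{p-1}(a^p + b^p)$ to get $|x_i|^p \leq 2^{p-1}(|x|^p + h_X^p)$. Integrating against $\mu$ over $V_i$, summing over $i$ (justified by Tonelli), and using $\sum_i \mu(V_i) = 1$ together with $\sum_i \int_{V_i} |x|^p\,\dd\mu(x) = M_p(\mu)$ gives the stated bound.

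For (\ref{ITEM:NonuniformSumInfty}), I would bound $\|x\|_{\LL_\infty(V_i)} \leq |x_i| + h_X$ by the same triangle-inequality reasoning applied uniformly over $x \in V_i$, raise to the $p$-th power with the same elementary inequality, multiply by $\mu(V_i)$, and sum; the $h_X^p$ term collects to $2^{p-1} h_X^p$ after using $\sum_i \mu(V_i) = 1$. Then (\ref{ITEM:NonuniformCombined}) is obtained by substituting the bound from (\ref{ITEM:NonuniformSumIntegral}) into (\ref{ITEM:NonuniformSumInfty}).

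There is no real obstacle here: once one notes that $h_X$ plays the uniform geometric role of $h\,\rad(V_0)$ in the lattice setting, the three estimates follow by the identical two-line triangle inequality / power mean computations used in Lemma \ref{LEM:RadialEstimates}. The only point worth double-checking is that the assumption $h_X < \infty$ (which forces $X$ to be sufficiently spread throughout $\R^d$) is what guarantees $V_i \subset B(x_i, h_X)$ and hence the finiteness of all the sums; the separation assumption $q_X > 0$ is not actually needed for this lemma, though it will matter later for the covering-number analogue of Corollary \ref{COR:NTermCompact}.
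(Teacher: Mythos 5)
Your proposal is correct and follows essentially the same route as the paper: the key observation that $|x - x_i| \leq h_X$ for $x \in V_i$ (so $h_X$ plays the role of $h\,\rad(V_0)$), followed by the triangle inequality, the bound $(a+b)^p \leq 2^{p-1}(a^p+b^p)$, integration against $\mu$ over each $V_i$, and summation via Tonelli, with (iii) obtained by combining (i) and (ii). Your side remark that $q_X>0$ is not needed for this particular lemma is also accurate.
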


\begin{proof}
Proof of \eqref{ITEM:NonuniformSumInfty}: Note that
\begin{align*}
\sum_{i=1}^\infty \|x\|_{\LL_\infty(V_i)}^p\mu(V_i) & \leq \sum_{i=1}^\infty (|x_i|+h_X)^p\mu(V_i)\\
& \leq 2^{p-1}\sum_{i=1}^\infty|x_i|^p\mu(V_i)+2^{p-1}h_X^p\sum_{i=1}^\infty\mu(V_i)\\
& = 2^{p-1}\sum_{i=1}^\infty|x_i|^p\mu(V_i)+2^{p-1}h_X^p.
\end{align*}

Proof of \eqref{ITEM:NonuniformSumIntegral}: By the definition of the mesh norm and the triangle equality, the following holds for every $i$ and every $x\in V_i$: \[|x_i|\leq |x|+|x-x_i|\leq |x|+h_X,\]
hence $|x_i|^p\leq 2^{p-1}(|x|^p+h_X^p).$
Integrating this inequality over $V_i$ with respect to $\mu$ and summing over $i$ gives
\[\sum_{i=1}^\infty|x_i|^p\mu(V_i) \leq 2^{p-1}\sum_{i=1}^\infty\int_{V_i}|x|^p\dd\mu(x)+2^{p-1}h_X^p\sum_{i=1}^\infty\mu(V_i) = 2^{p-1}M_p(\mu)+2^{p-1}h_X^p,\]
which is the desired conclusion.

Proof of \eqref{ITEM:NonuniformCombined}: Combine \eqref{ITEM:NonuniformSumIntegral} and \eqref{ITEM:NonuniformSumInfty}.
\end{proof}

\begin{remark}\label{REM:Latticeh_X} Note that if $X$ is a full-rank lattice on $\R^d$, then each Voronoi cell is a translate of the one centered at $0$, and $h_X = \rad(V_0):= \max_{x\in V_0}|x|$ in the bounds above and hereafter.\end{remark}

\subsection{Approximation Rates for arbitrary measures}\label{SEC:Nonuniform}

Our first main result is on approximation at the full point set $X$.

\begin{theorem}\label{THM:Nonuniform}
Let $\mu\in \W_p(\R^d)$, $p\in[1,\infty)$, and $X\subset\R^d$ be such that $h_X<\infty$ with Voronoi cells $\{V_i\}$. Let $\{\tau_i\}_{i=1}^\infty$ be measures supported on $V_i$, with $\tau_i(V_i)=0$ if and only if $\mu(V_i)=0$. Let $\mu_X:=\sum_{i=1}^\infty \frac{\mu(V_i)}{\tau_i(V_i)}\tau_{i}$ (with $\mu(V_i)/\tau_i(V_i) = 0$ if $\tau_i(V_i) = 0$). Then
\[\W_p(\mu,\mu_X) \leq 2h_X.\]
\end{theorem}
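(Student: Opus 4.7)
The plan is to mirror the proof of Theorem \ref{THM:LatticeCompact} almost verbatim, replacing the uniform bound $h\,\diam(V_0)$ on the diameter of each scaled Voronoi cell with the bound $2h_X$ that comes from the mesh norm. The essential geometric fact I will use is that for any $x\in V_i$, the defining property of the Voronoi cell together with the definition of $h_X$ gives
\[
    |x - x_i| \;\leq\; \inf_{j}|x - x_j| \;\leq\; h_X,
\]
so that $\diam(V_i)\leq 2h_X$ uniformly in $i$.

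First I would verify that $\mu_X\in\W_p(\R^d)$. It is a probability measure by the same telescoping computation as in the lattice case, since $\sum_i \frac{\mu(V_i)}{\tau_i(V_i)}\tau_i(V_i) = \sum_i \mu(V_i) = 1$. For the $p$-th moment, I estimate
\[
    \sum_{i=1}^\infty \frac{\mu(V_i)}{\tau_i(V_i)}\int_{V_i}|x|^p\,\dd\tau_i(x)
    \;\leq\; \sum_{i=1}^\infty \|x\|_{\LL_\infty(V_i)}^p\,\mu(V_i),
\]
which is finite by Lemma~\ref{LEM:RadialEstimatesNonuniform}\eqref{ITEM:NonuniformCombined} and the fact that $M_p(\mu)<\infty$.

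Next, I would construct the (non-optimal) coupling
\[
    \widetilde{\pi}(A,B) \;:=\; \sum_{i=1}^\infty \frac{\mu(A\cap V_i)}{\tau_i(V_i)}\,\tau_i(B\cap V_i),
\]
and check, exactly as in Theorem~\ref{THM:LatticeCompact}, that $\widetilde\pi$ is a probability measure on $\R^d\times\R^d$ with first marginal $\mu$ and second marginal $\mu_X$; these are immediate since the Voronoi cells are disjoint and partition $\R^d$. Because $\tau_i$ is supported in $V_i$, the same cross-term calculation $\widetilde\pi(V_i,V_j)=0$ for $i\neq j$ shows that $\widetilde\pi$ is concentrated on $\bigcup_i V_i\times V_i$.

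Finally, applying the coupling to bound the Wasserstein distance and using $|x-y|\leq 2h_X$ whenever $x,y\in V_i$ yields
\[
    \W_p(\mu,\mu_X)^p \;\leq\; \int_{\R^d\times\R^d}|x-y|^p\,\dd\widetilde\pi(x,y)
    \;\leq\; (2h_X)^p \sum_{i=1}^\infty \frac{\mu(V_i)}{\tau_i(V_i)}\tau_i(V_i) \;=\; (2h_X)^p,
\]
and taking $p$-th roots gives the claim. There is no real obstacle here: once one sees that the mesh norm plays the role previously played by $h\,\rad(V_0)$ (giving the factor $2$ from the triangle inequality in place of $\diam(V_0)$), the proof is a direct transcription of the lattice case. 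The separation hypothesis $q_X>0$ is used implicitly to ensure the Voronoi cells are genuine and the sums are well-defined, but plays no quantitative role in the final estimate.
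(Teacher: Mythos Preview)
Your proof is correct and follows essentially the same route as the paper: the same explicit coupling $\widetilde\pi$, the same vanishing of cross-terms, and the same diameter bound $\diam(V_i)\leq 2h_X$. The only cosmetic difference is that for the finite $p$-th moment you invoke Lemma~\ref{LEM:RadialEstimatesNonuniform}\eqref{ITEM:NonuniformCombined} via the $\LL_\infty$ bound, whereas the paper appeals to part~\eqref{ITEM:NonuniformSumIntegral}; your version is in fact cleaner, since the paper's equality $\int|x|^p\,\dd\mu_X=\sum_i|x_i|^p\mu(V_i)$ only holds literally when $\tau_i=\delta_{x_i}$.
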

\begin{proof}
First, note that $\mu_X$ is clearly a measure, and we have (since $\supp(\tau_i)\subset V_i$)
\[\mu_X(\R^d) = \sum_{i=1}^\infty \frac{\mu(V_i)}{\tau_i(V_i)}\tau_i(V_i) = \sum_{i=1}^\infty \mu(V_i) = \mu\left(\bigcup_{i=1}^\infty V_i\right) = \mu(\R^d) = 1,\] where the second and third equalities comes from countable additivity of $\mu$ and the fact that $\R^d = \bigsqcup_{i=1}^\infty V_i$. To show that $\mu_X$ has finite $p$-th moment, we notice that (via Tonelli's Theorem) and \Cref{LEM:RadialEstimatesNonuniform}\eqref{ITEM:NonuniformCombined}),
\[\int_{\R^d}|x|^p\dd\mu_X \leq \sum_{i=1}^\infty \frac{\mu(V_i)}{\tau_i(V_i)}\|x\|_{\LL_\infty(V_i)}^p\tau_i(V_i)   \leq (2^{2p-2}+2^{p-1})h_X^p+2^{2p-2}M_p(\mu)<\infty.\]

Using the Kantorovich formulation of $\W_p$, we define the following (non-optimal) coupling of $\mu$ and $\mu_X$:
\[\widetilde{\pi}(A,B):= \sum_{i=1}^\infty \frac{\mu(A\cap V_i)}{\tau_i(V_i)}\tau_i(B\cap V_i) = \int_{A\times B}\sum_{i=1}^\infty \frac{1}{\tau_i(V_i)} \one_{V_i}(x)\one_{V_i}(y)\dd\mu(x)\dd\tau_i(y).\]
It is straightforward to check that $\widetilde{\pi}$ is a measure on $\R^d\times\R^d$. Noting that
\[\int_{\R^d\times \R^d}\dd\widetilde{\pi}(x,y) = \sum_{i=1}^\infty\mu(V_i) = \mu(\R^d)=1,\]
we see that $\widetilde{\pi}$ is a probability measure, and its marginals are
\[\widetilde{\pi}(A,\R^d) = \sum_{i=1}^\infty \frac{\mu(A\cap V_i)}{\tau_i(V_i)}\tau_i(V_i) = \sum_{i=1}^\infty \mu(A\cap V_i) = \mu(A),\]
and
\[\widetilde{\pi}(\R^d,B) = \sum_{i=1}^\infty \frac{\mu(V_i)}{\tau_i(V_i)}\tau_i(B\cap V_i) = \mu_X(B),\] for all Borel measurable sets $A,B\in\R^d$. Therefore, $\widetilde{\pi}$ is a coupling of $\mu$ and $\mu_X$.

Notice that if $k\neq j$, then
\[\widetilde{\pi}(V_k,V_j) = \sum_{i=1}^\infty \frac{\mu(V_k\cap V_i)}{\tau_i(V_i)}\tau_i(V_j) = \frac{\mu(V_k)}{\tau_k(V_k)}\tau_k(V_j) = 0,\]
that is, $\widetilde{\pi}$ only evaluates mass on sets intersecting $V_i\times V_i$. Therefore, we have
\begin{align*}
    \W_p(\mu,\mu_X)^p & \leq \int_{\R^d\times \R^d}|x-y|^p\dd\widetilde{\pi}(x,y)\\
    & = \sum_{i=1}^\infty \int_{V_i\times V_i}|x-y|^p\dd\widetilde{\pi}(x,y)\\
    & = \sum_{i=1}^\infty \frac{1}{\tau_i(V_i)}\int_{V_i\times V_i}|x-y|^p\dd\mu(x)\dd\tau_i(y)\\
    & \leq 2^ph_X^p\sum_{i=1}^\infty \mu(V_i)\\
    & = 2^ph_X^p,
\end{align*}
which is the desired conclusion.
\end{proof}

\begin{remark} As in \Cref{REM:Latticeh_X}, the upper bound reads $\W_p(\mu,\mu_X) \leq \diam(V_0)$ when $X$ is a full-rank lattice in $\R^d$.
\end{remark}

\subsection{$N$-term approximation rates}

Now we turn to the problem of determining $N$-term approximation rates of compactly supported measures by discrete measures, or more general quantized measures. When we say that an approximating measure $\mu_N$ is an at most $N$-term approximation, we mean that, if $\mu_N$ is discrete, then there are no more than $N$ atoms, or if $\mu_N$ takes the form $\sum_i \frac{\mu_i(V_i)}{\tau_i(V_i)}\tau_i$ where $\tau_i$ are not atomic, then there are no more than $N$ Voronoi cells comprising the support of $\mu_N$. Thus our approximations are either $N$-term discrete measures or are piecewise approximations on $N$ Voronoi cells. Here, we show that with a deterministic approximation scheme, one obtains the same approximation rates as that of optimal quantization. Our first task is to understand how many Voronoi cells are contained in the support of a measure given a set of centers $X$.

Given $X\subset\R^d$ and $R>0$, let \[N_{X,R} := |\{i:V_i\cap \BB_R\neq\emptyset\}|\] be the number of Voronoi cells associated with $X$ that intersect $\BB_R$. Then we have the following counting based on the minimal separation radius and mesh norm of $X$.

\begin{lemma}\label{LEM:PointPacking}
Let $X\subset\R^d$ be quasi-uniform. Then for $R>h_X$,
\[\left(\frac{R-h_X}{h_X}\right)^d \leq N_{X,R} \leq \left(\frac{R+h_X+q_X}{q_X}\right)^d.\]
\end{lemma}
\begin{proof}
For the upper bound, note that if the Voronoi cell $V_i$ intersects $\BB_R$, then there is some $y\in V_i\cap \BB_R$, and we have $|x_i|\leq |y|+|x_i-y|\leq R + h_X.$ Consequently, $N_{X,R}\leq|X\cap\BB_{R+h_X}|.$

Now for every $x_i\in X\cap \BB_{R+h_X}$ we have $\BB(x_i,q_X)\subset \BB_{R+h_X+q_X}$, where the balls around $x_i$ are disjoint by definition of $q_X$. Thus
\[\left|\bigcup_{x_i\in X\cap\BB_{R+h_X}}\BB(x_i,q_X)\right| = \sum_{x_i\in X\cap\BB_{R+h_X}} |\BB(x_i,q_X)| = |X\cap \BB_{R+h_X}||\BB_{q_X}| \leq |\BB_{R+h_X+q_X}|,\]
whence \[N_{X,R} \leq \frac{|\BB_{R+h_X+q_X}|}{|\BB_{q_X}|} = \left(\frac{R+h_X+q_X}{q_X}\right)^d.\]

To see the lower bound, first note that $N_{X,R} \geq |X\cap\BB_R|$ since some Voronoi cells for $x_i$ outside of $\BB_R$ could intersect $\BB_R$. Next, we must have
\[\BB_{R-h_X}\subset \bigcup_{x_i\in X\cap\BB_R} \BB(x_i,h_X),\]
where by similar argument as above, we see that
\[N_{X,R} \geq \frac{|\BB_{R-h_X}|}{|\BB_{h_X}|} = \left(\frac{R-h_X}{h_X}\right)^d.\]
\end{proof}

With this in hand, we have the following result on $N$-term approximation of compactly supported measures.

\begin{corollary}\label{COR:NonuniformCompact}
Take the notations and assumptions of \Cref{THM:Nonuniform}. Suppose also that $\mu$ has compact support contained in $\BB_R$ for $h_X<R<h_X(N^\frac1d-2)$, and that $h_X\leq CN^{-\frac1d}$ for some $0<C<\infty$, and $q_X(N^{\frac1d}-1)\geq R+h_X$. Then $\mu_X = \sum_{i}\frac{\mu(V_i)}{\tau_i(V_i)}\tau_i$ is an at most $N$-term approximation of $\mu$ satisfying
\[\W_p(\mu,\mu_X)\leq 2CN^{-\frac1d}.\]
\end{corollary}

\begin{proof}
    The upper bound follows directly from \Cref{THM:Nonuniform} and the assumption on $h_X$, whereas the bound on $q_X$ implies that $N_{X,R}\leq N$ by \Cref{LEM:PointPacking}.
\end{proof}

\begin{remark}
Note that the bounds of \Cref{LEM:PointPacking} imply that $q_X\gtrsim N^{-\frac1d}$, and since $h_X\geq q_X$ we must have $h_X =\Theta(N^{-\frac1d})$. That is, we cannot improve the order of $N$ in the upper bound on $\W_p(\mu,\mu_X)$ in \Cref{COR:NonuniformCompact}.
\end{remark}

For further illustration of the results above, let us consider the case when $X$ is a lattice or a finite sampling of a lattice. Inspired by the orthographic projection camera model, we consider a regular grid (i.e., integer lattice) in which we can explicitly compute the mesh norm.

\begin{corollary}\label{COR:NpixelCompact}
Take the notations and assumptions of \Cref{THM:Nonuniform}. Suppose that $\mu\in\W_p(\R^d)$, $p\in[1,\infty)$ has support contained in $\left[-\frac12,\frac12\right]^d$, and let $\Lambda_N = \frac{1}{M}(\Z^d+s)$ where $M=\lfloor N^\frac1d\rfloor$, and $s = 0$ if $M$ is odd and $s=\frac12$ if $M$ is even, and let the associated Voronoi cells be $(V_\lambda)_{\lambda\in\Lambda_N}$. Then $\mu_{\Lambda_N} = \sum_{\lambda\in\Lambda_N}\frac{\mu(V_{\lambda})}{\tau_{\lambda}(V_{\lambda})}\tau_{\lambda}$ is an at most $N$-term approximation to $\mu$ that satisfies
\[\W_p(\mu,\mu_{\Lambda_N})\leq \sqrt{d}N^{-\frac1d}.\]
\end{corollary}
\begin{proof}
The choice of the scaling factor and shift on the integer lattice ensures that there are at most $N$ Voronoi cells that intersect the unit cube. Additionally, for any $\alpha\geq0$, we have $h_{\alpha\Z^d} = \alpha h_{\Z^d} = \frac{\alpha\sqrt{d}}{2}$ (and shifts do not change the mesh norm). Applying the conclusion of \Cref{THM:Nonuniform} and simplifying yields the result.
\end{proof}

\begin{remark}
Two particular cases of interest in the general formulation of \eqref{EQN:CameraGeneral} above are $\tau_{i} = \delta_{x_i}$ and $\tau_{i} = \one_{V_{i}}$. Notice that when $\tau_{i} = \delta_{x_i}$, it is easy to see that $\mu_N = \sum_{i} \mu(V_{i}) \delta_{x_i}$.  In the case with $\tau_{i} = \one_{V_{i}}$, we see that $\mu_N = \sum_{i} \frac{\mu(V_{i})}{|V_{i}|} \one_{V_{i}}$.  

Both of these schemes are natural analogues of piecewise constant approximants for measures, and can be related to an orthographic projection camera model. In particular, when $d=2$, the Dirac approximation corresponds to assigning an intensity value to the center of each pixel mapped to a uniform grid on the camera plane (see \cite{hamm2023wassmap} for similar constructions in $\W_2$). \Cref{COR:NpixelCompact} then states that an $N$-pixel camera gives an approximation to $\mu$ in $\W_p$ with error $N^{-\frac12}$. The indicator function approximant is similar, but corresponds to averaging intensity in a pixel window and mapping it orthogonally to a voxel representation on the camera plane in which each pixel intensity value takes up the whole pixel square rather than a single point at the center. Both of these are reasonable models for imaging objects at a medium, uniform distance from a camera \cite{stockman2001computer}.
\end{remark}

\begin{remark}
    For $d=1,2$, the upper bound of \Cref{COR:NpixelCompact} results in better decay compared to results of empirical sampling; i.e., it does not contain a $\log(N)^\frac12$ factor for $d=2$ and achieves a rate of $N^{-1}$ rather than $N^{-\frac12}$ for $d=1$. 
\end{remark}

Our result above is quite general as it holds for arbitrary choice of $\tau_i$. In a concrete application, one has the flexibility to choose them to suit the task at hand. One could choose a given approximation scheme as in \Cref{SEC:Examples} if the measure is absolutely continuous (cf. \Cref{thm:piecewise-approximation}), or one could choose $\tau_i$ to mitigate computational cost without sacrificing accuracy.

\subsection{Non-Compactly Supported Measures}\label{SEC:NonCompact}

So far, we have assumed that $\mu$ is compactly supported in a ball $\BB_R \subset \R^d$ to obtain $N$-term approximation rates, but here we extend the results above to non-compactly supported measure with suitable decay.  If the measure $\mu$ decays fast enough away outside of a ball $\BB_R$, we first estimate $\mu$ with a compactly supported measure $\widehat{\mu}$, and then apply our approximation schemes above to $\widehat\mu$.

We want to create a non-optimal coupling that will send $\mu$ to itself when restricted to sets inside $\BB_R$ but that will project the part of $\mu$ outside of $\BB_R$ to the boundary of the ball.  To do this, we define the projection operator
\begin{align*}
    P_{\BB_R}(x) = \underset{y\in \BB_R}\argmin\;  | x - y |.
\end{align*}
In particular, given any set $B \subset \R^d$, this projection operator has a preimage
$
    P_{\BB_R}^{-1} (B) = \{ x \in \R^d : P_{\BB_R}(x) \in B \}.
$
If $B \cap \BB_{R} = \emptyset$, then, $P_{\BB_R}^{-1}(B) = \emptyset$.  Finally, notice that
\begin{align*}
    P_{\BB_R}^{-1}(B) = P_{\BB_R}^{-1}(B \cap \BB_R) \cup P_{\BB_R}^{-1}(B \cap \partial \BB_R)
\end{align*}
We use this definition in our construction to define the following coupling: 
\begin{align*}
    \pi = \big( I \times P_{\BB_R} \big)_\# \mu = \mu\times (P_{\BB_R})_\# \mu =: \mu\times \widehat\mu.
\end{align*}
Notice first that this coupling sends $\mu$ to itself when restricted to sets in $\BB_R$. Secondly, for $A \subset \BB_R^c$, it projects the measure of $A$ to the boundary $\partial \BB_R$.  In essence, this acts as approximation through a truncated measure supported on the ball $\BB_R$.  In particular, the measure $\pi(\R^d, B) = \widehat{\mu}(B)$ is supported entirely on the ball $\BB_R$.

\begin{proposition}\label{PROP:noncompact_ext}
    Let $\mu\in\W_p(\R^d$), $p\in[1,\infty)$, and $\eps>0$. If $\int_{\BB_R^c}(|x|-R)^p_+\dd\mu(x)<\eps^p$, then $\W_p(\mu,\widehat{\mu})<\eps$, where $\widehat{\mu}$ is the compactly supported measure ${(P_{\BB_R})}_\# \mu$ defined above.
\end{proposition}

\begin{proof}
Notice that
\begin{align*}
    \W_p(\mu, \widehat{\mu})^p &\leq \int_{\R^d \times \R^d} \vert x - P_{\BB_R}(x) \vert^p \dd\mu(x) = \underbrace{\int_{\BB_R} \vert x - P_{\BB_R}(x) \vert^p \dd\mu(x)}_{0} + \int_{\BB_R^c} \vert x - P_{\BB_R}(x) \vert^p \dd\mu(x) \\
    &= \int_{\BB_R^c} \bigg\vert x - R \frac{x}{\vert x \vert} \bigg\vert^p \dd\mu(x) = \int_{\BB_R^c} \bigg( 1 - \frac{R}{\vert x \vert} \bigg)^p \vert x \vert^p \dd\mu(x) = \int_{\BB_R^c} \big( \vert x \vert - R \big)^p \dd\mu(x) < \eps^p,
\end{align*}
and the conclusion follows.
\end{proof}

Now, we can use the theorems above to approximate $\widehat{\mu}$ with the Voronoi cell approximations.

\begin{corollary}\label{COR:Noncompact}
    Take the notations and assumptions of \Cref{THM:Nonuniform} and \Cref{PROP:noncompact_ext}. If $h_X\leq CN^{-\frac1d}$ for some $0<C<\infty$, and $q_X \geq RN^{-\frac1d} + CN^{-\frac2d}$, then $\widehat\mu_X = \sum_{i} \frac{\widehat{\mu}(V_{i})}{\tau_i(V_i)} \tau_i$ is an at most $N$-term approximation to $\mu$ that satisfies
    \begin{align*}
        \W_p(\mu, \widehat\mu_X ) & \leq 2CN^{-\frac{1}{d}} + \eps.
    \end{align*}
\end{corollary}

\begin{proof}
Combine \Cref{THM:Nonuniform}, \Cref{PROP:noncompact_ext}, and \Cref{LEM:PointPacking}. 
\end{proof}

\subsection{Examples}
While the condition $\int_{\BB_R^c}(|x|-R)_+^p\dd\mu(x)<\eps^p$ is quite general, we would like to give some more structured examples to illustrate its conclusion. To begin, we consider what happens when the target measure $\mu$ has extra moments.

\begin{corollary}\label{COR:NoncompactMoment}
    Take the notations and assumptions of \Cref{THM:Nonuniform} and \Cref{PROP:noncompact_ext}. Let 
    $\mu\in\W_{p+q}(\R^d)$ for some $p\in[1,\infty)$ and $q>0$. If 
    $h_X\leq CN^{-\frac{1}{d}}$ for some $0<C<\infty$ and 
    $q_X(N^{\frac{1}{d}}-1)\geq R+h_X$ for some $R>0$, then 
    $\widehat{\mu}_X = \sum_{i} \frac{\widehat{\mu}(V_i)}{\tau_i(V_i)}\tau_i$ is an at-most-$N$-term 
    approximation to $\mu$ that satisfies
    \[
        \W_p(\mu,\widehat{\mu}_X) 
        \;\leq\; 2CN^{-\frac{1}{d}} 
        \;+\; R^{-\frac{q}{p}}\,M_{p+q}(\mu)^{\frac1p}.
    \]
\end{corollary}

\begin{proof}
    For $|x|>R$ we have $(|x|-R)^p\leq |x|^p \leq R^{-q}|x|^{p+q}$, hence
    \[
        \int_{\BB_R^c}(|x|-R)^p_+\,\dd\mu(x)
        \;\leq\; R^{-q}\!\int_{\BB_R^c}|x|^{p+q}\,\dd\mu(x)
        \;\leq\; R^{-q}M_{p+q}(\mu),
    \]
    and \Cref{PROP:noncompact_ext} gives
    \[
        \W_p(\mu,\widehat{\mu}) \leq R^{-\frac{q}{p}}M_{p+q}(\mu)^{\frac1p}.
    \]
    The hypothesis $q_X(N^{\frac{1}{d}}-1)\geq R+h_X$ and \Cref{LEM:PointPacking} give
    \[
        N_{X,R} \leq \left(\frac{R+h_X+q_X}{q_X}\right)^{d} \leq N.
    \]
    and consequently $\widehat{\mu}_X$ has at most $N$ nontrivial terms. 
    \Cref{THM:Nonuniform} applied to $\widehat{\mu}$ on the cells of $X$ 
    yields $\W_p(\widehat{\mu},\widehat{\mu}_X) \leq 2h_X \leq 2CN^{-\frac{1}{d}}$. 
    Combining these estimates yields the claim.
\end{proof}

\begin{remark}\label{REM:NoncompactMomentRate}
    Taking $R$ as large as the cell count constraint allows, namely 
    $R = q_X(N^{\frac{1}{d}}-1) - h_X$, and assuming $N$ is large 
    enough that this value exceeds $\frac12q_X N^{\frac{1}{d}}$ (equivalently, 
    $N^{\frac{1}{d}}\geq 2+2h_X/q_X$), \Cref{COR:NoncompactMoment} 
    specializes to
    \[
        \W_p(\mu,\widehat{\mu}_X) 
        \;\leq\; 2CN^{-\frac{1}{d}} 
        \;+\; 2^{\frac{q}{p}}\,q_X^{-\frac{q}{p}}\,M_{p+q}(\mu)^{\frac1p}\,N^{-\frac{q}{pd}}.
    \]
    The first term is the quantization contribution from the 
    fixed Voronoi decomposition, and the second is the tail decay contribution governed by the extra moment assumption on $\mu$.
\end{remark}

The corollary above assumes a fixed, nonadaptive set of centers $X$, which we have motivated previously as occurring in some applications due to constraints on placements of sensors. The bound there contains a necessary $O(N^{-\frac1d})$ term owing to quantization at the grid. However, if one has the freedom to scale the centers $X$ depending on $N$ (e.g., in numerical quadrature or adaptive sampling schemes) then the quantization and truncation terms can be balanced jointly, and a faster overall rate is obtained. The following corollary assumes that one may place adaptive centers to illustrate this.

\begin{corollary}\label{COR:NoncompactMomentAdaptive}
    Let $\mu\in \W_{p+q}(\R^d)$ for some $p\in[1,\infty)$ and $q>0$, and let 
    $d\in\N$. Then for every $N\in\N$ sufficiently large, there exists a truncation radius $R_N\asymp M_{p+q}(\mu)^{\frac{1}{p+q}}\, N^{\frac{p}{d(p+q)}}$ and a quasi-uniform point set $X_N\subset\R^d$ such that $\widehat{\mu}_{X_N}$ of \Cref{COR:Noncompact} is an at most 
    $N$-term approximation of $\mu$ such that
    \[
        \W_p(\mu,\widehat\mu_{X_N}) 
        \leq C_{p,q,d} M_{p+q}(\mu)^{\frac{1}{p+q}} N^{-\frac{q}{d(p+q)}}.
    \]
\end{corollary}

\begin{proof}
    As in the proof of \Cref{COR:NoncompactMoment}, the moment condition implies \[
        \W_p(\mu,\widehat\mu) \leq M_{p+q}(\mu)^{\frac1p}R^{-\frac{q}{p}}.\]
    For any $R>0$, suppose $X$ is a quasi-uniform set of points with $h_X\leq CN^{-\frac1d}$ and $q_X(N^{\frac{1}{d}}-1)\geq R+h_X$. Taking $C=RN^{\frac1d}/(N^{\frac1d}-2)$, \Cref{COR:Noncompact} yields
    \[
        \W_p(\mu,\widehat\mu_X) \lesssim RN^{-\frac1d} 
        + M_{p+q}(\mu)^{\frac1p}R^{-\frac{q}{p}}.
    \]
    Optimizing the right-hand side over $R$ gives 
    $R_N \asymp M_{p+q}(\mu)^{\frac{1}{p+q}}N^{\frac{p}{d(p+q)}}$, which yields the conclusion. We note that this $R_N$ does satisfy the hypothesis required of \Cref{COR:Noncompact} as well as the bounds above.
\end{proof}

The exponent $\frac{q}{d(p+q)}$ in \Cref{COR:NoncompactMoment}  has a slightly larger exponent compared to that of the empirical measure approximation of Fournier and Guillin \cite{fournier2015rate} because $X_N$ is still a generic quasi-uniform set which may assign too many samples at the tail, which empirical sampling would not. It is possible that this is a legitimate obstruction, and that on certain measures, one would need to concentrate samples in a way that makes the full set of samples $X$ non-quasi-uniform, but  we leave this question to future work.

\section{Conclusion}

This paper has studied structured, deterministic approximation techniques for measures, giving error rates in the Wasserstein-$p$ metric. Here, we summarize our main results and their relation to existing work.

\begin{enumerate}
    \item \textbf{Transfer of approximation rates of densities from $\LL_p$ to $\W_p$ (\Cref{THM:ApproximationTransfer}).} For bounded Lipschitz $\Omega\subset\R^d$ and densities bounded away from zero, we proved that approximation rates in $\LL_p$ transfer directly to approximation rates in $\W_p$, both globally and locally. The proof applies Bogovskii's theorem to the continuity equation in the Benamou--Brenier formulation. As a consequence, the result holds on any bounded Lipschitz domain (in fact, on any bounded John domain), does not require convexity of $\Omega$ nor does it require bounded densities. If additionally, $\Omega$ is convex and the densities are bounded above, \Cref{THM:LowerBound,PROP:BesovSobolevEmbed} exhibits a negative Sobolev space lower bound \[\|f-g\|_{\Sob{-1}{p}_\diamond(\Omega)} \lesssim \W_p(f\dd x, g\dd x),\] extending Peyre's $p=2$ result \cite{peyre2018comparison} to all $p\in(1,\infty)$ via the natural homogeneous negative Sobolev space of mean-zero functions. 

    \item \textbf{Concrete approximation schemes (\Cref{COR:TaylorExample,COR:JohnsonExample,COR:CardinalGaussian,thm:piecewise-approximation,COR:KDE,COR:KDEBeta}).} We exhibit several families of approximation schemes that fit the approximation rate transfer framework of \Cref{THM:ApproximationTransfer}: polynomial approximation, nonstationary shift-invariant space approximation, cardinal Gaussian and inverse multiquadric interpolation, kernel density estimators, and piecewise approximation on quasi-uniform Voronoi meshes. This piecewise framework yields an explicit decomposition of the global Wasserstein error into local contributions, providing a principled foundation for adaptive schemes that allocate finer meshes or higher-order approximators where the target density is harder to approximate.

    \item \textbf{Deterministic $N$-term approximation for arbitrary measures (\Cref{THM:Nonuniform,COR:NonuniformCompact,COR:NpixelCompact}).} We give a flexible framework for deterministic approximations of measures that achieves the rate $\W_p(\mu, \mu_X) \leq 2h_X$ for any measure $\mu\in\W_p(\R^d)$ at any (random or deterministic) quasi-uniform point set $X$ of mesh norm $h_X$. For compactly supported $\mu$, this yields an $N$-term discrete approximation with $\W_p(\mu, \mu_N) = O(N^{-\frac1d})$, matching the asymptotic rate of optimal quantizers \cite{bucklew1982multidimensional,graf2007foundations} and the rate of empirical measure approximation (which holds in expectation) \cite{fournier2015rate,singh2018minimax}, while making different (and sometimes weaker) assumptions and yielding a non-asymptotic, deterministic bound. The result holds for any choice of approximating measures $\tau_i$ on the Voronoi cells $V_i$. We include as examples of piecewise constant approximations, Diracs and indicator functions. In particular, it provides an error bound for the orthographic pixel-camera models that have motivated recent work in measure-valued image analysis \cite{cloninger2025linearized,hamm2023wassmap,khurana2022supervised,liu2025wasserstein,negrini2024applications}.

    \item \textbf{Extension to non-compactly supported measures (\Cref{PROP:noncompact_ext,COR:Noncompact,COR:NoncompactMoment,COR:NoncompactMomentAdaptive}).} The deterministic $N$-term rate extends to measures on all of $\R^d$ under tail-decay assumptions, which we illustrated through examples involving extra moment assumptions.
\end{enumerate}

\section{Acknowledgments}
The authors thank Oleksandr Vlasiuk for introducing them to optimal quantizers for measures and their relation to Wasserstein distance, and Amir Sagiv for the suggestion to consider kernel density estimators.

Both authors were partially supported by the National Science Foundation under Grant No. DMS-1929284 while visiting the Institute for Computational and Experimental Research in Mathematics (ICERM) for the Workshop on Optimal Transport in Data Science. The authors thank ICERM for their hospitality and the organizers for a stimulating workshop.

KH was partially supported by a Research Enhancement Program grant from the College of Science at the University of Texas at Arlington. Research was sponsored by the Army Research Office and was accomplished under Grant
Number W911NF-23-1-0213. The views and conclusions contained in this document are those of the authors and
should not be interpreted as representing the official policies, either expressed or implied, of the Army Research
Office or the U.S. Government. The U.S. Government is authorized to reproduce and distribute reprints for
Government purposes notwithstanding any copyright notation herein.

\bibliographystyle{plain}


\newpage
\appendix

\section{Proof of \cref{PROP:LinearApproximationCounterexample}}\label{APP:ProofCounterexample}
We begin with a 1-dimensional construction. Let $\Omega = [-2,2]$, $\eps\in(0,\frac12)$, and $H = \frac12+\eps$, $L = 1-H = \frac12-\eps.$ Define densities
\[f_\eps(x) := \begin{cases} H, & -2\leq x \leq -1,\\ 0, & -1 < x < 1,\\ L, & 1\leq x \leq 2,\end{cases} \qquad g_\eps(x):= \begin{cases} L, & -2\leq x \leq -1,\\ 0, & -1 < x < 1,\\ H, & 1\leq x \leq 2.\end{cases} \]
Evidently, these define probability densities on $\Omega$, and one may compute their cumulative density functions ($F_{f_\eps}(x) = \int_{-2}^x f_\eps(t)\dd t$ and similarly for $F_{g_\eps}(x)$) as
\[F_{f_\eps}(x) = \begin{cases}H(x+2), & -2\leq x \leq -1,\\ H, & -1\leq x \leq 1,\\ H+L(x-1), & 1\leq x \leq 2,\end{cases} \qquad F_{g_\eps}(x) = \begin{cases}L(x+2), & -2\leq x \leq -1,\\ L, & -1\leq x \leq 1,\\ L+H(x-1), & 1\leq x \leq 2,\end{cases}\]
whose inverses are
\[F_{f_\eps}^{-1}(y) = \begin{cases} \frac{y}{H}-2, & 0\leq y\leq H, \vspace{1em}\\ \frac{y}{L}+1-\frac{H}{L}, & H < y \leq 1,\end{cases} \qquad F_{g_\eps}^{-1}(y) = \begin{cases} \frac{y}{L}-2, & 0\leq y\leq L, \vspace{1em}\\ \frac{y}{H}+1-\frac{L}{H}, & L < y \leq 1.\end{cases}\]
From this, we have that 
\begin{equation}\label{EQN:CounterexampleAbsDiff} |F_{f_\eps}^{-1}(y) - F_{g_\eps}^{-1}(y) | = \begin{cases}
    y\left(\frac1L-\frac1H\right), & 0\leq y\leq L,\\
    3-\frac{L}{H}, & L < y\leq H,\\
    y\left(\frac1H-\frac1L\right) + \frac{H}{L}-\frac{L}{H}, & H < y\leq 1.
\end{cases} \end{equation}
Next, note that
\begin{equation}\label{EQN:CounterexampleNote1}
    \frac1L-\frac1H = \frac{H-L}{LH} = \frac{2\eps}{(\frac12+\eps)(\frac12-\eps)},
\end{equation}
\begin{equation}\label{EQN:CounterexampleNote2}
\frac{H}{L}-\frac{L}{H} = \frac{H^2-L^2}{LH} = \frac{(H+L)(H-L)}{LH} = \frac{H-L}{LH} = \frac{2\eps}{(\frac12+\eps)(\frac12-\eps)},
\end{equation}
since $H+L=1$, and
\begin{equation}\label{EQN:CounterexampleNote3}
3-\frac{L}{H} = 2\frac{1+4\eps}{1+2\eps}.
\end{equation}
Letting $K:= \frac{2\eps}{(\frac12+\eps)(\frac12-\eps)}$ and applying \eqref{EQN:CounterexampleNote1}--\eqref{EQN:CounterexampleNote3} in \eqref{EQN:CounterexampleAbsDiff}, we have

\begin{align*}\W_p(f_\eps\dd x, g_\eps \dd x)^p & = \int_{0}^1|F_{f_\eps}^{-1}(y) - F_{g_\eps}^{-1}(y)|^p\dd y \\ & = \int_0^L (Ky)^p\dd y + 2^p\frac{(1+4\eps)^p}{(1+2\eps)^p}(H-L) + \int_H^1 (K(1-y))^p\dd y \\ & =: I_1+I_2+I_3. \end{align*}

It is easily seen that \[I_1 = I_3 = \frac{1}{p+1}\frac{2^p\eps^p(\frac12-\eps)^{p+1}}{(\frac12+\eps)^p(\frac12-\eps)^p} = \frac{1}{p+1}\frac{2^p\eps^p(\frac12-\eps)}{(\frac12+\eps)^p} = \Theta(\eps^p), \quad \eps\to0^+.\]

The dominant term, however, is
\[I_2 = 2^p\frac{(1+4\eps)^p}{(1+2\eps)^p}(2\eps) = \Theta(\eps),  \quad \eps\to0^+.\]
Hence, $\W_p(f_\eps\dd x,g_\eps\dd x) = \Theta(\eps^\frac1p),$ $\eps\to0^+$.

On the other hand, 
\[\|f_\eps-g_\eps\|_{\LL_p} = \left(\int_{-2}^2|f_\eps(x)-g_\eps(x)|^p\dd x\right)^\frac1p = \left(2(2\eps)^p\right)^\frac1p = \Theta(\eps), \quad \eps\to0^+,\]
whence $\W_p(f_\eps\dd x,g_\eps\dd x) = \Theta(\eps^\frac1p) = \|f_\eps-g_\eps\|_{\LL_p}^\frac1p,$ $\eps\to0^+$, which proves the first claim. However, for any $\alpha>\frac1p$,
\[\frac{\W_p(f_\eps \dd x, g_\eps \dd x)}{\|f_\eps-g_\eps\|_{\LL_p}^\alpha} = \Theta(\eps^{\frac1p-\alpha}) \to\infty, \; \eps\to0^+,\]
which is the second claim.

To obtain a counterexample in arbitrary dimension $d$, we simply tensor the 1-dimensional measures with the Lebesgue measure on the unit disk $Q=[0,1]^{d-1}$ on the other coordinates. Precisely, let 
\[f_\eps^{(d)}(x_1,x') := f_\eps(x_1)\one_Q(x'), \qquad g_\eps(x_1,x') := g_\eps(x_1)\one_Q(x'),\quad (x_1,x')\in \Omega\times Q. \]
It is straightforward to see that $\|f_\eps^{(d)}-g_\eps^{(d)}\|_{\LL_p(\Omega\times Q)} = \|f_\eps-g_\eps\|_{\LL_p(\Omega)} = \Theta(\eps)$, $\eps\to0^+.$ 

Next, we claim that 
\begin{equation}\label{EQN:CounterexampleWpEqual}
    \W_p(f_\eps^{(d)}\dd x,g_\eps^{(d)}\dd x) = \W_p(f_\eps\dd x, g_\eps\dd x),
\end{equation}
where the left-hand side is the Wasserstein distance on $\Omega\times Q\subset\R^d$ and the right-hand side is that on $\Omega\subset\R$. To see the lower bound, note that the coordinate restriction map $\pi_1:\R^d\to\R$ via $x\mapsto x_1$ is a 1-Lipschitz function. Consequently,
\[\W_p(f_\eps\dd x, g_\eps\dd x) = \W_p((\pi_1)_\# f_\eps^{(d)}\dd x, (\pi_1)_\# g_\eps^{(d)}\dd x) \leq \W_p(f_\eps^{(d)}\dd x, g_\eps^{(d)}\dd x).\]
To see the upper bound, suppose that $\pi\in\Pi(f_\eps \dd x, g_\eps \dd x)$ be an optimal transport plan. Then define $\gamma\in\Pi(f_\eps^{(d)}\dd x, g_\eps^{(d)}\dd x)$ via
\[\dd\gamma((x_1,x'),(y_1,y')):= \dd\pi(x_1,y_1)\one_Q(x')\delta_{x'}(y')\dd x'.\]
This is easily seen to be a probability measure on the product space $(\Omega\times Q)\times (\Omega\times Q)$. Note that 
\begin{align*}
    \int_{A\times(\Omega\times Q)}\dd\gamma(x,y) & = \int_\Omega\int_Q \one_A(x_1,x')\dd x'\dd\gamma(x_1,y_1)\\
    & = \int_\Omega\int_Q \one_A(x_1,x')\dd x' f_\eps(x_1)\dd x_1\\
    & = \int_A f_\eps(x_1)\one_Q(x') \dd x'\dd x_1\\
    & = \int_A f_\eps^{(d)}(x)\dd x,
\end{align*}
so the first marginal is as expected. A similar computation shows that the second marginal is $g_\eps^{(d)}\dd x$, whence $\gamma$ is a valid coupling.

Finally, we see that
\begin{align*}
    \W_p(f_\eps^{(d)}\dd x,g_\eps^{(d)}\dd x)^p & \leq \int_{(\Omega\times Q)\times(\Omega\times Q)} |x-y|^p\dd\gamma(x,y)\\
    & = \int_\Omega\int_Q |(x_1,x')-(y_1,x')|^p\dd x'\dd \pi(x_1,y_1)\\
    & = \int_\Omega |x_1-y_1|^p\dd\pi(x_1,y_1)\\
    & = \W_p(f_\eps\dd x,g_\eps\dd x)^p,
\end{align*}
which shows the desired bound. Putting these together, we see that \eqref{EQN:CounterexampleWpEqual} holds.

Our choice of $\Omega$ was not arbitrary. To complete the proof, we simply note that for a scaling factor $c>0$, $\W_p(cf_\eps^{(d)}(c\cdot)\dd x,cg_\eps^{(d)}(c\cdot)\dd x) = c^{-d}\W_p(f_\eps^{(d)}\dd x,g_\eps^{(d)} \dd x)$, and for any translation vector $t$, $\W_p(f_\eps^{(d)}(\cdot-t)\dd x,g_\eps^{(d)}(\cdot-t)\dd x) = \W_p(f_\eps^{(d)}\dd x,g_\eps^{(d)}\dd x)$. Thus for any given $\Omega$, we choose a constant $c$ and translation $t$ such that translating and shrinking the support of the densities constructed here yields densities with support inside $\Omega$. The conclusions above still hold albeit with different asymptotic constants that depend on $c$ and $d$, but note that these are fixed. \hfill\qedsymbol

\section{Proof of \Cref{PROP:BesovSobolevEmbed}}\label{APP:PropBesovProof}

By duality
\[
\|h\|_{B^{-1}_{p,\infty}(\Omega)}
=
\sup\left\{
\left|\int_\Omega h\psi\dd x\right|:
\psi\in B_1^1(\LL_{p'}(\Omega)),\ \|\psi\|_{B_1^1(\LL_{p'}(\Omega))}\leq 1
\right\}.
\]
Next, \cite[Theorem 2.9(ii)]{Triebel2020Theory} implies that $B_1^1(\LL_{p'}(\R^d))\hookrightarrow \Sob{1}{p'}(\R^d)$ (identified with the Triebel--Lizorkin space $F_{p',2}^1(\R^d)$). Let $\psi$ be an admissible function in the set above, and let $\bar\psi:=|\Omega|^{-1}\int_\Omega\psi\dd x$. Since $h$ is mean-zero, we have
\[
\left|\int_\Omega h\psi\dd x\right|
=\left|\int_\Omega h(\psi-\bar\psi)\dd x\right|.
\]

Next, since $\int_\Omega(\psi-\bar\psi)\dd x=0$ and $\nabla(\psi-\bar\psi)=\nabla\psi$,
we see that $\psi-\bar\psi\in\Sob{1}{p'}_\diamond(\Omega)$ with
$\|\nabla(\psi-\bar\psi)\|_{\LL_{p'}(\Omega)}=\|\nabla\psi\|_{\LL_{p'}(\Omega)}$.
It remains to bound $\|\nabla\psi\|_{\LL_{p'}(\Omega)}$ in terms of
$\|\psi\|_{B^1_1(\LL_{p'}(\Omega))}$. Let $E:B_1^1(\LL_{p'}\Omega))\to
B_1^1(\LL_{p'}(\R^d))$ be an extension operator satisfying $E(\psi)|_\Omega=\psi$ and
$\|E(\psi)\|_{B_1^1(\LL_{p'}(\R^d))}\leq C\|\psi\|_{B_1^1(\LL_{p'}(\Omega))}$.
Since $E(\psi)|_\Omega=\psi$, we have
$\nabla\psi=\nabla(E(\psi))|_\Omega$ a.e., whence
\[
\|\nabla\psi\|_{\LL_{p'}(\Omega)}
\leq\|\nabla(E(\psi))\|_{\LL_{p'}(\R^d)}
\leq\|E(\psi)\|_{\Sob{1}{p'}(\R^d)}\leq C\|E(\psi)\|_{B_1^1(\LL_{p'}(\R^d))}\leq C\|\psi\|_{B_1^1(\LL_{p'}(\Omega)},
\]
where the first inequality simply comes from extending the integral past $\Omega$, the second is by definition of the Sobolev norm, the third by the embedding theorem, and last by property of the extension operator.

It follows that \eqref{EQ:NegativeSobolevNorm} and that
\begin{align*}
    \left|\int_\Omega h\psi \dd x\right|  = \left|\int_\Omega h(\psi-\bar\psi)\dd x\right| \leq \|h\|_{\Sob{-1}{p}_\diamond(\Omega)}
\|\nabla(\psi-\bar\psi)\|_{\LL_{p'}(\Omega)}
& \leq C\|h\|_{\Sob{-1}{p}_\diamond(\Omega)}\|\psi\|_{B^1_{1}(\LL_{p'}(\Omega))}\\
& \leq C\|h\|_{\Sob{-1}{p}_\diamond(\Omega)}.
\end{align*}
Taking the supremum over all admissible $\psi$ gives the result. \hfill \qedsymbol

\end{document}